\newtheorem{theorem}{Theorem}
\newtheorem{definition}{Definition}
\newtheorem{proposition}[theorem]{Proposition}
\title{Domain Adaptation via Maximizing Surrogate Mutual Information}
\author{
Haiteng Zhao
\and
Chang Ma\and
Qinyu Chen\And
Zhi-Hong Deng
\affiliations
Peking University\\
\emails
\{zhaohaiteng,changma,chenqinyu,zhdeng\}@pku.edu.cn
}
\begin{document}

\maketitle

\begin{abstract}
Unsupervised domain adaptation (UDA) aims to predict unlabeled data from target domain with access to labeled data from the source domain. In this work, we propose a novel framework called SIDA (Surrogate Mutual Information Maximization Domain Adaptation) with strong theoretical guarantees. To be specific, SIDA implements adaptation by maximizing mutual information (MI) between features. In the framework, a surrogate joint distribution models the underlying joint distribution of the unlabeled target domain. Our theoretical analysis validates SIDA by bounding the expected risk on target domain with MI and surrogate distribution bias. Experiments show that our approach is comparable with state-of-the-art unsupervised adaptation methods on standard UDA tasks.
\end{abstract}

\section{Introduction}
Inspired by human beings’ ability to transfer knowledge across domains and tasks, transfer learning is proposed to leverage knowledge from source domain and task to improve  performance on target domain and task. 
% The importance of transfer learning has been widely recognized for it formulates the more general form of learning. 
However, in practice, labeled data are often limited on target domains. To address such situation, unsupervised domain adaptation (UDA), a category of transfer learning methods \cite{long2015learning,long2017deep,ganin2016domain}, 
 attempts to enhance knowledge transfer from labeled source domain to target domain by leveraging unlabeled target domain data.

Most previous work is based on the data shift assumption, \emph{i.e.}, the label space maintains the same across domains, but the data distribution conditioned on labels varies. Under this hypothesis, domain alignment and class-level method are used to improve generalization across source and target feature distributions.
%为了解决问题
Domain alignment minimizes the discrepancy between the feature distributions of two domains \cite{long2015learning,ganin2016domain,long2017deep}, while class-level methods work on conditional distributions. Conditional alignment aligns conditional distributions and use pseudo-labels to estimate conditional distribution on target domain \cite{long2018conditional,li2020domain,chen2020adversarial}. However, the conditional distributions from different categories tend to mix together, leading to performance drop. Contrastive learning based methods resolve this issue by discriminating features from different classes \cite{luo2020unsupervised}, but still face the problem of pseudo-label precision.
% Despite the success of class-level methods, 
In addition, most of the class-level methods lack solid theoretical explanations for the relationship between cross domain generalization and their objectives.
% Prior theoretical results mainly focus on marginal distribution between domains \cite{ben2007analysis,redko2020survey}. 
Some works \cite{chen2019progressive,xie2018learning} yield some intuition for conditional alignment and contrastive learning, but the relation between their training objectives and cross-domain error remains unclear.
% It is not clear how empirical objects such as conditional alignment and contrast clustering enhance the generalization from the source domain to the target domain theoretically, whether different objects are all necessary, and how to choose between different goals.

In this work, we aim to address the generalization problem in domain adaptation from an information theory perspective. In failed case of domain adaptation, as shown in Figure \ref{motivation},  features from the same class do not represent each other well and this inspires us to use mutual information to reduce this confusion. Our motivation is to find more representative features for both domains by maximizing mutual information between features of the same class (on both source and target domains). Therefore, if our classifier can accurately predict features on source domain, then it would also function well on target domains where features share enough information with the source features.

% The measure of the concept of mutual information of intra-class features is, given several randomly sampled feature, whether these samples belong to the same class can be determined. High intra-class mutual information means that features from the same class can predict each other, so we can judge whether they belong to the same class; conversely, low intra-class mutual information means that features from the same class cannot predict each other, so we cannot determine whether these features are from the same class.

% Based on this measure, We use Figure 1 to demonstrate the intuition above: for successful domain adaptation, a feature is more capable of representing all features of the same class, and given some sampled features, we can determine whether they are from the same class; conversely, for unsuccessful domain adaptation, it is difficult to judge whether several features belong to the same class.

 Based on the above motivation, we propose Surrogate Information Domain Adaptation (SIDA), a general domain adaptation framework with strong theoretical guarantees. SIDA achieves adaptation by maximizing the mutual information (MI) between features within the same class, which improves the generalization of the model to the target domain. Furthermore, a surrogate distribution is constructed to approximate the unlabeled target distribution, which improves flexibility for selecting data and assists MI estimation. Also, our theoretical analyses directly establish a bound between MI of features and target expected risk, giving a proof that our model can improve generalization across domain.

Our novelties and contributions are summarized as follows:
\begin{itemize}
    
    \item We propose a novel framework to achieve domain adaptation by maximizing surrogate MI.
    
    \item We establish an expected risk upper bound based on feature MI and surrogate distribution bias for UDA. This provides theoretical guarantee for our framework.
    
    \item Experiment results on three challenging benchmarks demonstrate that our method performs favorably against state-of-art class-level UDA models.
    
\end{itemize}

\begin{figure}[htbp]
\centering
% \begin{figure*}
%这里使用的是强制位置，除非真的放不下，不然就是写在哪里图就放在哪里，不会乱动
    \vspace{-1pt} %设置与上面正文的距离
	\includegraphics[width=0.8\linewidth]{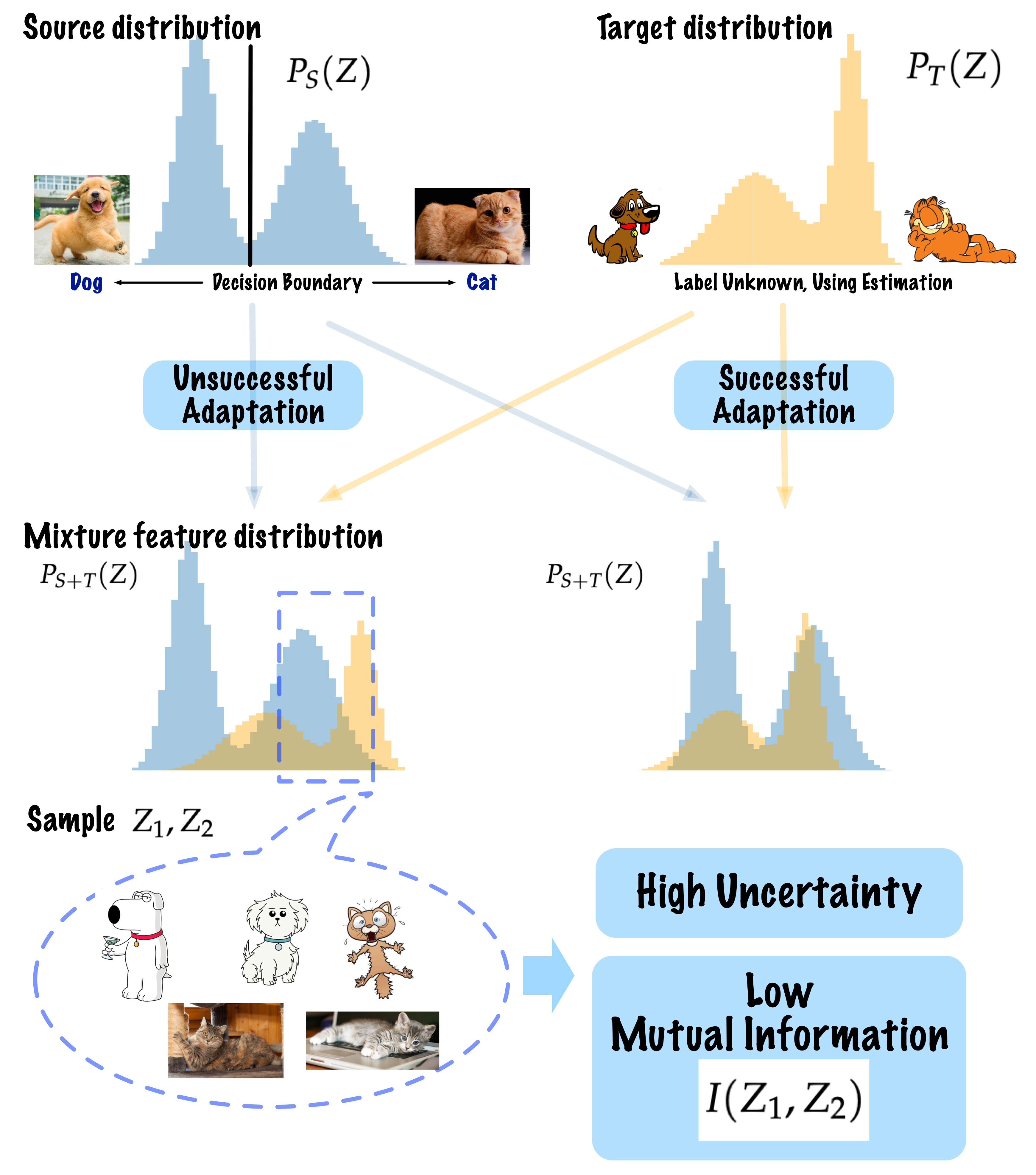}
	\caption{Our motivation is that higher intra-class feature mutual information implies better generalization ability across domain. As shown, unsuccessful domain adaptation implies lower intra-class feature mutual information. This is reflected in the fact that features from the same class do not represent each other well because of confusion with features from other classes.}
	\label{motivation}
\end{figure}

\section{Related Work}

\textbf{Domain Adaptation}\ \ 
Prior works are based on two major assumptions: (1) the label shift hypothesis, where the label distribution changes, and (2) a more common data shift hypothesis where we only study the shift in conditional distribution under the premise that the label distribution is fixed. 
Our work focuses on the data shift hypothesis, and previous work following this line can be divided into two major categories: domain alignment methods which align marginal distributions, and class-level methods addressing the alignment of conditional distributions.

Domain alignment methods minimize the difference between feature distributions of source and target domains with various metrics, e.g. maximum mean discrepancy (MMD) \cite{long2015learning}, JS divergence \cite{ganin2016domain} estimated by adversarial discriminator, Wasserstein metric and others. Maximum Mean Discrepancy (MMD) is applied to measure the discrepancy in marginal distributions \cite{long2015learning,long2017deep}. Adversarial domain adaptation plays a mini-max game to learn domain-invariant features \cite{ganin2016domain,li2020model}.
% The motivation is that domain difference is an important component of expected risk upper bound on target domain \cite{redko2020survey}
% Various methods have been proposed to promote feature alignment, including discriminator based methods \cite{zhang2019bridging,tang2020discriminative} and generator based methods  \cite{wen2019exploiting,cui2020gradually}.

Class-level methods align the conditional distribution based on pseudo-labels \cite{li2020domain,chen2020adversarial,luo2020unsupervised,li2020enhanced,tang2020discriminative,xu2019larger}. Conditional alignment methods  \cite{xie2018learning,long2018conditional} minimize the discrepancy between conditional distributions.  In class-level methods, conditional distributions are assigned by pseudo-labels. The accuracy of pseudo-labels greatly influences performance and later works construct more accurate pseudo-labels \cite{chen2020adversarial}. However, the major problem with this method is that error in conditional alignment leads to distribution overlap of features from different class, resulting in low discriminability on target domain.  Contrastive learning addresses this problem by maximizing the discrepancy between different classes \cite{luo2020unsupervised,li2020enhanced}. However, the performance of contrastive learning also relies on pseudo-labeling.

In addition, previous class-level works provide weak theoretical support for cross-domain generalization.
Prior works mainly focus on domain alignment \cite{ben2007analysis,redko2020survey}. Some works
\cite{chen2019progressive,xie2018learning} consider optimal classification on both domains, and yield some intuitive explanation for conditional alignment and contrastive learning, but the relation between their objective function and theoretical cross-domain error remains unclear.

\textbf{Information Maximization Principle}\ \ Recently, mutual information maximization (InfoMax) for representation learning has attracted lots of attention \cite{chen2020simple,hjelm2018learning,khosla2020supervised}. The intuition is that two features belonging to different classes should be discriminable while features of the same class should resemble each other. The InfoMax principle provides a general framework for learning informative representations, and provides consistent boosts in various downstream tasks.

We facilitate domain adaptation with MI maximization, i.e. maximizing the MI between features of the same class. Some works solve domain adaptation problem via information theoretical methods \cite{thota2021contrastive,chen2020structure,park2020joint}, which maximize MI using InfoNCE estimation \cite{poole2019variational}. As far as we know, we are the first to provide theoretical guarantee for the target domain expected risk based on MI. Compared with InfoNCE, the variational lower bound of MI we use is tighter \cite{poole2019variational}. We also construct a surrogate distribution as a substitute for unlabeled target domain, which is more suitable for MI estimation.

\begin{figure*}[htbp]
% \begin{figure*}
%这里使用的是强制位置，除非真的放不下，不然就是写在哪里图就放在哪里，不会乱动
    \vspace{-1pt} %设置与上面正文的距离
	\includegraphics[width=\textwidth]{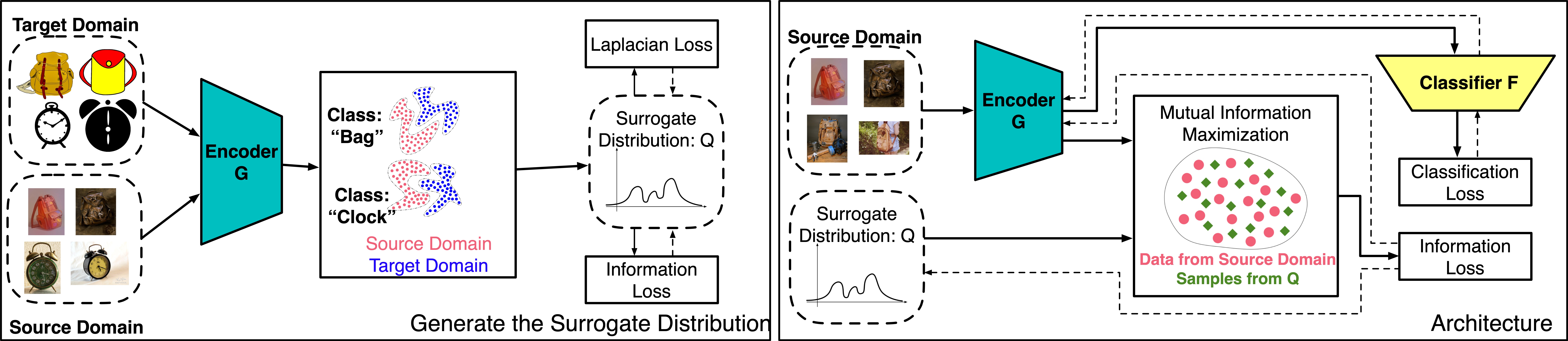}
	\caption{Overview of SIDA framework for training. Only encoder and classifier are involved in inference. The dashed arrow shows the path  of the gradient backpropagation.}
	\label{overview}
\end{figure*}

\section{Preliminaries}
\subsection{Notations and Problem Setting}
Let $\mathcal{X}$ be the data space and $\mathcal{Y}$ be the label space. In UDA, there
is a source distribution $P_S(X,Y)$ and a target distribution $P_T(X,Y)$ on $\mathcal{X} \times \mathcal{Y}$. Note that distributions are also referred to as domains in UDA. Our work is based on the data shift hypothesis, which assumes $P_S(X,Y)$ and $P_T(X,Y)$ satisfy the following properties: $P_T(Y)= P_S(Y)$ and $P_T(X|Y)\not = P_S(X|Y)$ . 

In our work, we focus on classification tasks. Under this setting, an algorithm has access to $n_S$ labeled samples $\{(x_S^i,y_S^i)\}_{i=1}^{n_S} \sim P_S(X,Y)$ and $n_T$ unlabeled samples $\{(x_T^i)\}_{i=1}^{n_T} \sim P_T(X)$, and outputs a hypothesis composed of an encoder $G$ and a classifier $F$. Let $\mathcal{Z}$ be the feature space. The encoder maps data to feature space, denoted by $G : \mathcal{X} \rightarrow \mathcal{Z}$. Then the classifier maps the feature to a corresponding class, $F: \mathcal{Z} \rightarrow \mathcal{Y}$. 

For brevity, given encoder $G$ and data-label distribution $P(X,Y)$, denote the distribution of $G$-encoded feature and label by $P^G$, i.e. $P^G(z,y)= P(x=G^{-1}(z),y)$.

Let $F$ be a hypothesis, and $P$ be the distribution of feature and label. The expected risk of a F w.r.t. P is denoted as
\begin{equation}
    \setlength{\abovedisplayskip}{4pt}
        \setlength{\belowdisplayskip}{0cm}
    \epsilon_P (F)\triangleq\mathbb{E}_{P(z)}|\delta_{F(z)}-P(y|z)|_1,
\end{equation}
% This expected risk denotes the discrepancy between distributions F(Z,Y) and P(Y|Z), which measure the performance of classification.

  where $\delta_{F(z)}(y)$ equals to 1 if $y=F(z)$ and equals 0 in else cases.  Our objective is to minimize the expected risk of $F$ on target feature distribution encoded by $G$,
\begin{equation}
\setlength{\abovedisplayskip}{4pt}
        \setlength{\belowdisplayskip}{0cm}
    \min_{G,F} \epsilon_{P^G_T}(F).
\end{equation}

\section{Methodology}
\subsection{Overview}
In UDA task, the model needs to generalize across different domains with varying distributions; thus the encoder needs to extract appropriate features that are transferable across domains. The challenges of class-level adaptation are two folds: learning transferable features, and modeling $P^G_T(Z|Y)$ without label information.

To solve the first problem, we use MI based methods. Following the InfoMax principle, we maximize the mutual information between features from the same class on the target and source mixture distribution. This encourages the features of the source domain to carry more information about the features of the same class in target domain, and thus provides opportunities for transferring classifier across domains.

As for the second challenge, we first revisit the data shift hypothesis. The distribution of labels $P(Y)$ remains independent of domains; therefore the key is to model the conditional distribution $P(Z|Y)$ on the target domain. However, modeling $P(Z|Y)$ is intractable, since labels on the target domain are inaccessible. To tackle this problem, we model a surrogate distribution $Q(Z|Y)$ instead.

% In UDA task, the model needs to generalize across different domains with varying distributions. The challenges of cross-domain generalization lies in extracting appropriate features that are transferable across domains. 

% To solve this problem, we propose to maximize the discriminability of features by maximizing the mutual information between features from the same class on target and source mixture distribution. Intuitively, this can reduce the confusion of features from different categories, reducing the difficulty of classification. However, label is unknown on target domain, which makes MI untraceable. To tackle this problem, we model a surrogate distribution $Q(Z,Y)$ instead. Due to the data shift hypothesis, the distribution of labels $P(Y)$ remains independent of domains, therefore the key is to model the conditional distribution $Q(Z|Y)$ on the target domain.

We introduce the goal of maximizing MI in section \ref{MI goal}, and theoretically explain how MI affects domain adaptation risk. In Section \ref{SIDA Framework}, we will introduce the model in detail, including the variational estimation  of MI, the modeling of the surrogate distribution, and the optimization of the loss function of the model.

\subsection{Mutual Information Maximization}\label{MI goal}
MI measures the degree to which two variables can predict each other. Inspired by InfoMax principle \cite{hjelm2018learning}, we maximize the MI between the features within the same class. It encourages features from different classes to be discriminable from each other. 

We maximize MI between features on both source and target domain, regardless of which domain they come from. So we introduce mixture distribution  $S+T$ of both domain, which is 
\begin{small}
\begin{equation}
\setlength{\abovedisplayskip}{4pt}
        \setlength{\belowdisplayskip}{0cm}
P_{S+T}(x,y) \triangleq \frac{1}{2}(P_S(x,y) + P_T(x,y)).
\end{equation}
\end{small}

Note that because $P_S(y)=P_T(y)=P_{S+T}(y)$, $P_{S+T}(x|y)=\frac{1}{2}P_{S}(x \mid y)+\frac{1}{2}P_{T}(x \mid y)$. Define the distribution of features from the same class as 

\begin{small}
\begin{equation}
\begin{aligned}
\setlength{\abovedisplayskip}{4pt}
        \setlength{\belowdisplayskip}{0cm}
P_{S+T}^G(z_1,z_2|y)\triangleq P_{S+T}^G(z_1|y)P_{S+T}^G(z_2|y), \quad\\ P_{S+T}^G(z_1,z_2)=\sum_y P_{S+T}^G(y) P_{S+T}^G(z_1,z_2|y).
\end{aligned}
\end{equation}
\end{small}

which means the feature $z_1$ and $z_2$ are sampled independently from the conditional distribution of the same class, with equal probability from source domain and target domain.

MI between features is maximized within the mixture distribution, as formalized bellow:
\begin{small}
\begin{equation}
\begin{aligned}
    \arg\max_{G} & I^G_{S+T}(Z_1; Z_2)\\=&\int P_{S+T}^G(z_1,z_2) \log \frac{P_{S+T}^G(z_1,z_2)}{P_{S+T}^G(z_1)P_{S+T}^G(z_2)}dz_1dz_2.
    \end{aligned}
\end{equation}
\end{small}

However, due to the lack of target domain labels, $P^G_{S+T}$ is hard to model and thereby it is infeasible to estimate $I^G_{S+T}$ directly. To address this problem, we propose a surrogate joint distribution $Q(Z,Y)$ as the substitute for target domain $P^G_T$. Then the mixture distribution becomes $P_{S+Q}^G=\frac{1}{2}(P^G_S+Q)$, and the objective becomes maximizing $I_{S+Q}^G(Z_1; Z_2)$. The construction and optimization of the surrogate joint distribution is explained in Section \ref{sd}.

\subsubsection{Theoretical Motivation for MI Maximization}
We use theoretical bound to demonstrate the motivation for using MI maximization. Our theoretical results prove that minimizing the expected risk on the target domain can be naturally transformed into MI maximization and expected risk minimization on the source domain, which explains why MI maximization is pivotal to our framework. The proofs are in appendix. 

\begin{definition}[$\mathcal{H} \Delta \mathcal{H}$-Divergence]
Let  $F_1\in \mathcal{H},F_2\in \mathcal{H}$ be two hypotheses in hypothesis space $\mathcal{H}:\mathcal{Z} \rightarrow \mathcal{Y}$. Define $ \epsilon_P(F_1, F_2)$ as the disagreement between hypotheses $F_1, F_2$ w.r.t. distribution $P$ on $\mathcal{Z}$, $\epsilon_{P}\left(F_{1}, F_{2}\right)\triangleq\mathbb{E}_{z \sim P}\left|\delta_{F_{1}(z)}-\delta_{F_{2}(z)}\right|$. $\mathcal{H} \Delta \mathcal{H}$-divergence, which is the discrepancy of two distributions $P_1, P_2$ w.r.t. any hypothesis $F_1-F_2$ where $F_1,F_2 \in \mathcal{H}$, is defined as    $d_{\mathcal{H} \Delta \mathcal{H}}(P_1,P_2)\triangleq2\sup_{F_1,F_2 \in \mathcal{H}}|\epsilon_{P_1}(F_1,F_2)-\epsilon_{P_2}(F_1,F_2)| $.
\end{definition}

\begin{theorem}[Bound of Target Domain expected risk]
The expected risk on target domain can be upper-bounded by the negative MI between features, and $\mathcal{H} \Delta \mathcal{H}$ -divergence between features of two domains:
\begin{small}
	\begin{equation}
	\begin{aligned}
	\setlength{\abovedisplayskip}{4pt}
        \setlength{\belowdisplayskip}{0cm}
	    \epsilon_{P^G_T}(F) 
	\leq \epsilon_{P^G_S}(F)-4I_{S+T}^G(Z_1;Z_2)\\+ \frac{1}{2} d_{\mathcal{H} \Delta \mathcal{H}}(P^G_S(Z),P^G_T(Z))+4H(Y).
	\end{aligned}
	\end{equation}
\end{small}

\end{theorem}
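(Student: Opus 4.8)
The plan is to combine a Ben-David style domain-adaptation decomposition with two information-theoretic inequalities that convert the pairwise feature mutual information into a bound on label uncertainty. The structure of the statement is telling: the divergence term already appears with coefficient $\tfrac12$, exactly as in the classical bound, so I expect that term to fall out of the decomposition verbatim, and essentially all the real work to lie in showing that the usual ideal-joint-error residual $\lambda$ is controlled by $4H(Y)-4I^G_{S+T}(Z_1;Z_2)$.

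First I would set up the decomposition. Treating the true posterior $P^G(y\mid z)$ as a soft reference labeling and reading $\epsilon_{P}(F)=\mathbb{E}_{P(z)}|\delta_{F(z)}-P(y\mid z)|_1$ as an $L_1$ distance on the label simplex, I would introduce an ideal joint hypothesis $F^\ast\in\mathcal H$ and apply the triangle inequality twice. Since the disagreement $\epsilon_{P}(F_1,F_2)$ depends only on the feature marginal, I get $\epsilon_{P^G_T}(F)\le \epsilon_{P^G_T}(F,F^\ast)+\epsilon_{P^G_T}(F^\ast)$, then $\epsilon_{P^G_T}(F,F^\ast)\le \epsilon_{P^G_S}(F,F^\ast)+\sup_{F_1,F_2}|\epsilon_{P^G_T}(F_1,F_2)-\epsilon_{P^G_S}(F_1,F_2)|=\epsilon_{P^G_S}(F,F^\ast)+\tfrac12 d_{\mathcal H\Delta\mathcal H}(P^G_S(Z),P^G_T(Z))$, and finally $\epsilon_{P^G_S}(F,F^\ast)\le \epsilon_{P^G_S}(F)+\epsilon_{P^G_S}(F^\ast)$. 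Collecting terms yields the skeleton $\epsilon_{P^G_T}(F)\le \epsilon_{P^G_S}(F)+\tfrac12 d_{\mathcal H\Delta\mathcal H}(P^G_S(Z),P^G_T(Z))+\lambda$ with residual $\lambda=\epsilon_{P^G_S}(F^\ast)+\epsilon_{P^G_T}(F^\ast)$. To control $\lambda$, I would first note that the given risk equals twice the misclassification probability, $\epsilon_P(F)=2\,\mathbb E_{P(z)}[1-P(F(z)\mid z)]$, and choose $F^\ast$ to be the MAP classifier of the mixture posterior $P^G_{S+T}(y\mid z)$, so that $\lambda=2\,\epsilon_{P^G_{S+T}}(F^\ast)=4\,\mathbb E_{P^G_{S+T}(z)}\!\left[1-\max_y P^G_{S+T}(y\mid z)\right]$. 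The elementary entropy--error inequality $1-\max_y p(y)\le H(p)$ (in bits) then replaces the Bayes error by conditional entropy, giving $\lambda\le 4H^G_{S+T}(Y\mid Z)$.

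The conceptual crux is the final conversion of $H^G_{S+T}(Y\mid Z)$ into the feature mutual information. By the very construction of $P^G_{S+T}(z_1,z_2\mid y)=P^G_{S+T}(z_1\mid y)P^G_{S+T}(z_2\mid y)$, the features $Z_1$ and $Z_2$ are conditionally independent given $Y$, so $Z_1-Y-Z_2$ is a Markov chain. The data processing inequality then gives $I^G_{S+T}(Z_1;Z_2)\le I^G_{S+T}(Z_1;Y)=H(Y)-H^G_{S+T}(Y\mid Z)$, i.e. $H^G_{S+T}(Y\mid Z)\le H(Y)-I^G_{S+T}(Z_1;Z_2)$. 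Substituting this into $\lambda\le 4H^G_{S+T}(Y\mid Z)$ and plugging the result into the decomposition delivers the claimed inequality.

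The main obstacle I anticipate is bookkeeping rather than a single deep step: getting the constant $4$ right (one factor of $2$ from summing the ideal error over both domains, i.e. twice the mixture error, and another from the risk being twice the misclassification probability), and applying the two entropy inequalities with a consistent logarithm base. A secondary subtlety is realizability --- the argument uses the mixture MAP classifier as $F^\ast$, which must lie in $\mathcal H$ (or be approximated within it) for $\lambda$ to be bounded by the Bayes errors; I would state this as a standing assumption. The Markov-chain/DPI step is where the essential idea resides, since it is precisely what turns ``pairwise similarity of same-class features'' into ``low label uncertainty,'' and hence into a usable target-risk bound.
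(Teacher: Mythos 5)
Your proposal is correct and follows essentially the same route as the paper's proof: the same $\mathcal{H}\Delta\mathcal{H}$ triangle-inequality decomposition with the mixture MAP classifier as the ideal hypothesis, the same reduction of the residual to $4\,\mathbb{E}_{P^G_{S+T}(z)}\left[1-\max_y P^G_{S+T}(y\mid z)\right]$, and the same conditional-independence/data-processing step giving $I^G_{S+T}(Z_1;Z_2)\le I^G_{S+T}(Z_1;Y)$. The only cosmetic differences are that you invoke the Bayes-error--entropy inequality $1-\max_y p(y)\le H(p)$ as a single lemma where the paper derives the same bound from $1-x\le-\log x$ plus the MAP property (and that inequality holds in nats as well as bits, so your base worry is moot), and that you explicitly flag the realizability assumption $F^\ast\in\mathcal{H}$, which the paper leaves implicit.
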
 

The proof is in appendix. We give an explanation of the conditions for the upper bound to be equal. $I^G_{S+T}(Z_1;Z_2)$ is a lower bound of $I^G_{S+T}(Z;Y)$, and it measures how much uncertainty of $Y$ is reduced by knowing the feature, and it's equal to $H(Y)$ if and only if $P^G_{S+T}(Y|Z)$ is deterministic, i.e., $P^G_{S+T}(Y|Z)$ is $\delta$ distribution, which means $P^G_S(Y|Z)=P^G_T(Y|Z)=\delta_{Y(Z)}$. Thus if the $\mathcal{H} \Delta \mathcal{H}$ -divergence is zero, i.e., $P^G_S(Z)=P^G_T(Z)$, then it's ensured that $P^G_S(Z,Y)=P^G_T(Z,Y)$, and $\epsilon_{P^G_T}(F) 
	= \epsilon_{P^G_S}(F)$. 

This upper bound decomposes the cross-domain generalization error into the divergence of feature marginal distribution and MI of features. It emphasizes that in addition to the divergence of the feature marginal distributions, only a MI term is enough for knowledge transfer across domains.

In this work, we minimize the expected risk on the source domain and maximize MI, for minimizing the upper bound of expected risk on target domain. Due to the lack of labels on target domain, we estimate MI based on surrogate distribution $Q$. The expected risk upper bound based on surrogate MI is further derived as follows. 

\begin{definition}[$L_1$-distance]
    Define $L_1$-distance of ${P_1},{P_2}$ as        $d_{1}\left({P_1},{P_2}\right)\triangleq2 \sup _{B \in {\textbf{B}}}\left|\operatorname{Pr}_{P_1}[B]-\operatorname{Pr}_{P_2}[B]\right|$ where B is the set of measurable subsets under $P_1$ and $P_2$.
\end{definition}
\begin{theorem}[Bound Estimation with Surrogate Distribution]\label{maintheory}Let $B\triangleq d_1(P_T^G(Z),Q(Z))+ \epsilon_{P^G_T}(Q(Y|Z))$ be the bias of surrogate distribution $Q$ w.r.t target distribution. The expected risk on target domain can be upper-bounded by the negative surrogate MI between features, $\mathcal{H} \Delta \mathcal{H}$ -divergence between source and target domain, and additional bias of surrogate domain:
\begin{small}
	\begin{equation}
	\begin{aligned}
	\setlength{\abovedisplayskip}{4pt}
        \setlength{\belowdisplayskip}{0cm}
	    \epsilon_{P^G_T}(F) 
	\leq \epsilon_{P^G_S}(F)-4I_{S+Q}^G(Z_1;Z_2)+ B \\+\frac{1}{2} d_{\mathcal{H} \Delta \mathcal{H}}(P^G_S(Z),P^G_T(Z))+4H(Y).
	\end{aligned}
	\end{equation}
\end{small}
	
\end{theorem}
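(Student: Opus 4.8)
The plan is to deduce this theorem directly from the preceding result (Theorem~1, the $S+T$ version) by showing that replacing the true target component $P^G_T$ by the surrogate $Q$ inside the mixture distorts the mutual-information term by no more than the bias $B$. The two bounds share the summands $\epsilon_{P^G_S}(F)$, $\tfrac{1}{2}d_{\mathcal{H}\Delta\mathcal{H}}(P^G_S(Z),P^G_T(Z))$ and $4H(Y)$, and the $\mathcal{H}\Delta\mathcal{H}$ term still compares the \emph{true} marginals $P^G_S(Z)$ and $P^G_T(Z)$. Hence it suffices to establish the single inequality
\begin{equation}
-4 I_{S+T}^G(Z_1;Z_2) \le -4 I_{S+Q}^G(Z_1;Z_2) + B,
\end{equation}
equivalently $4\bigl(I_{S+Q}^G(Z_1;Z_2)-I_{S+T}^G(Z_1;Z_2)\bigr)\le B$; substituting this into Theorem~1 immediately yields the claim.

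To bound the mutual-information gap I would first reduce the comparison of the two pairwise informations to a comparison of label-informative quantities. Using the Markov relation $Z_1 - Y - Z_2$ built into $P^G_{S+T}(z_1,z_2\mid y)$ and $P^G_{S+Q}(z_1,z_2\mid y)$, the data-processing inequality gives $I^G(Z_1;Z_2)\le I^G(Z;Y)$ for each mixture and lets me express both quantities through the class-conditional feature distributions together with the common label marginal $P(Y)=P_{S+T}(Y)=P_{S+Q}(Y)$. The only difference between the mixtures is the target half, $P^G_{S+T}=\tfrac12(P^G_S+P^G_T)$ versus $P^G_{S+Q}=\tfrac12(P^G_S+Q)$, so I would split the information difference into two contributions: one driven by the mismatch of the feature marginals $P^G_T(Z)$ and $Q(Z)$, bounded by the total-variation term $d_1(P^G_T(Z),Q(Z))$, and one driven by the mismatch of the label-posteriors $P^G_T(Y\mid Z)$ and $Q(Y\mid Z)$, bounded by $\epsilon_{P^G_T}(Q(Y\mid Z))=\mathbb{E}_{P^G_T(z)}|Q(y\mid z)-P^G_T(y\mid z)|_1$. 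Summing the two contributions reproduces exactly $B$.

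The main obstacle is the posterior-mismatch piece: mutual information and conditional entropy are not Lipschitz with respect to total variation in general, and the failure is worst precisely in the near-deterministic-posterior regime the bound targets. I expect to control this by exploiting the finiteness of the label space and the boundedness supplied by the $H(Y)$ term, so that the change in $H^G(Y\mid Z)$ induced by swapping $P^G_T(Y\mid Z)$ for $Q(Y\mid Z)$ is governed linearly by the $L_1$ posterior gap $\epsilon_{P^G_T}(Q(Y\mid Z))$ rather than by an unbounded logarithmic factor. Care is also needed to keep the marginal-mismatch and posterior-mismatch errors cleanly separated, so that each maps onto its own summand of $B$ and the factor $4$ is absorbed exactly. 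Once the displayed inequality is secured, the theorem follows by plugging it into Theorem~1.
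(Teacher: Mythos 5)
Your reduction hinges entirely on the displayed inequality $-4I_{S+T}^G(Z_1;Z_2)\le -4I_{S+Q}^G(Z_1;Z_2)+B$, and that inequality is false; the obstacle you flag in your final paragraph (non-Lipschitzness of MI in total variation near deterministic posteriors) is not a technical nuisance to be engineered around but a genuine obstruction. Concretely: take binary uniform labels, source features $s_0,s_1$ that perfectly separate the classes, a surrogate $Q$ supported on features $t_0,t_1$ with deterministic posteriors $Q(Y\mid t_y)=\delta_y$, and a true target supported on the same two points but with posteriors flipped with probability $\epsilon/2$, i.e. $P_T^G(z=t_y\mid y)=1-\epsilon/2$. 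Then $Q(Z)=P_T^G(Z)$ (both uniform on $\{t_0,t_1\}$), so $d_1\bigl(P_T^G(Z),Q(Z)\bigr)=0$, and $\epsilon_{P_T^G}\bigl(Q(Y\mid Z)\bigr)=\epsilon$, hence $B=\epsilon$. A direct computation of the pairwise mixture MIs as defined in the paper gives $I_{S+Q}^G(Z_1;Z_2)=\log 2$, while
\begin{equation}
I_{S+T}^G(Z_1;Z_2)=\log 2-\tfrac{\epsilon}{2}\log\tfrac{1}{\epsilon}+O(\epsilon),
\end{equation}
because the cross terms in which $Z_1,Z_2$ come from opposite clusters carry probability $O(\epsilon)$ but log-ratio $\log\epsilon$. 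Thus $4\bigl(I_{S+Q}^G-I_{S+T}^G\bigr)=2\epsilon\log\tfrac{1}{\epsilon}+O(\epsilon)$, which exceeds $B=\epsilon$ for every sufficiently small $\epsilon$, and no constant in front of $B$ can repair this. The finiteness of the label space does not help: the Fannes-type modulus $\epsilon\log(1/\epsilon)$ is exactly attained in the confident-pseudo-label regime, which is the regime the surrogate is designed for. Note this does not contradict Theorem 2 itself; it only shows Theorem 2 cannot be obtained from Theorem 1 by perturbing the MI term.

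The paper's proof never compares the two mutual informations. It reruns the Theorem 1 argument from scratch with the surrogate posterior $Q(Y\mid Z)$ inserted as an intermediate labeling function and with $F'$ chosen optimal on the mixture $P^G_{S+Q}$: triangle inequalities give $\epsilon_{P^G_T}(F)\le \epsilon_{P^G_T}\bigl(Q(Y\mid Z)\bigr)+\epsilon_{P^G_T}\bigl(F',Q(Y\mid Z)\bigr)+\epsilon_{P^G_T}(F,F')+\dots$; the term $\epsilon_{P^G_T}\bigl(F',Q(Y\mid Z)\bigr)$ is exchanged for $\epsilon_{Q}(F')$ at cost $d_1\bigl(P_T^G(Z),Q(Z)\bigr)$, which is legitimate because the swap of measure happens inside a risk, a linear functional of the feature marginal with bounded integrand $|\delta_{F'(z)}-Q(y\mid z)|_1\le 2$; and only then is $\epsilon_{Q}(F')+\epsilon_{P^G_S}(F')$ converted into $-4I_{S+Q}^G(Z_1;Z_2)+4H(Y)$ by the same log-inequality and data-processing steps as in Theorem 1. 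In other words, the bias $B$ must enter through risk terms, where total-variation control is valid, and the MI bound must be applied directly to the $S+Q$ mixture; any salvage of your plan would have to be restructured along these lines, which is precisely the paper's proof.
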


The proof is in appendix. This theorem supports the feasibility of domain adaptation via maximizing surrogate MI $I_{S+Q}^G(Z_1;Z_2)$. The bias of surrogate distribution is expressed in terms $d_1(P_T^G(Z),Q(Z))+ \epsilon_{P^G_T}(Q(Y|Z))$, where the first term is the distance between the surrogate and target feature marginal distribution, and the second term is the risk of conditional label surrogate distribution. To minimize the upper bound, the bias of the surrogate distribution should be small. 

Bias  equal to zero if and only if surrogate feature distribution and conditional label distribution are the same as target distribution, i.e., $P_T^G=Q$, where surrogate distribution does not introduce  errors.

\subsection{SIDA Framework} \label{SIDA Framework}
We employ MI maximization and surrogate distribution  in our SIDA framework, as shown in Figure  \ref{overview}. During training, a surrogate distribution is first built from target and source data via optimizing w.r.t. Laplacian and MI. Then a mixture data distribution is created by encoding source data to features and sampling target features from the surrogate distribution. The encoder is optimized by maximizing MI, and minimizing classification error. 
The overall loss is: 
\begin{small}
\begin{equation}
    \setlength{\abovedisplayskip}{4pt}
        \setlength{\belowdisplayskip}{0cm}
    L_{model} =L_{Classify}+ \alpha_1 L_{MI} + \alpha_2 L_{Auxiliary} + L_{Laplacian}.
\end{equation}
\end{small}
 We elaborate each module in the following sections, and introduce the optimization of surrogate distribution in the last sections.

\subsubsection{Mutual Information Estimation}
Several MI estimation and optimization methods are proposed in deep learning \cite{poole2019variational}. In this work, we use the following variational lower bound of MI as proposed in \cite{nguyen2010estimating}:
\begin{small}
\begin{equation}
\begin{aligned}
    \setlength{\abovedisplayskip}{4pt}
        \setlength{\belowdisplayskip}{0cm}
    I(Z_1;Z_2) \ge & \mathbb{E}_{P(z_1, z_2)}[f(z_1, z_2)]\\
    &-e^{-1} \mathbb{E}_{P(z_1)}[\mathbb{E}_{P(z_2)}\left[e^{f(z_1, z_2)}\right]],
\end{aligned}
\end{equation}
\end{small}
where $f$ is a score function in $\mathcal{Z} \times \mathcal{Z} \rightarrow R$. The equality holds when $\frac{e^{f(z_1,z_2)}}{\mathbb{E}_{P(z_1)}e^{f(z_1,z_2)}}=\frac{P(z_1|z_2)}{P(z_1)}$ and $\mathbb{E}_{P(z_1)}\mathbb{E}_{P(z_2)}e^{f(z_1,z_2)}=e$. The proof is in appendix. 
Therefore maximizing MI can be transformed into maximizing its lower bound, and the loss is:
\begin{small}
\begin{equation} \label{NWJ}
\begin{aligned}
    \setlength{\abovedisplayskip}{4pt}
        \setlength{\belowdisplayskip}{0cm}
    L_{MI}=&-\mathbb{E}_{P_{S+Q}^G(y)}\mathbb{E}_{P_{S+Q}^G(z_1|y)}\mathbb{E}_{P_{S+Q}^G(z_2|y)}[f(z_1, z_2)]\\
    & + e^{-1} \mathbb{E}_{P_{S+Q}^G(z_1)}[\mathbb{E}_{P_{S+Q}^G(z_2)}\left[e^{f(z_1, z_2)}\right]],
    \end{aligned}
\end{equation}
\end{small}
where $f(z_1,z_2)$ is constructed as $T_{m_1}^{m_2}(|z_1-z_2|_2)$. $T_{m_1}^{m_2}$ is a threshold function, i.e., $T_{m_1}^{m_2}(a)=\max(m_1,\min(m_2,a))$.
% \begin{equation} \label{NWJ}
% \setlength{\abovedisplayskip}{4pt}
%         \setlength{\belowdisplayskip}{0cm}
% T_{m_1}^{m_2}(x)=\left\{ 
% \begin{array}{cc}
%       m_1   & x< m_1     \\ 
%       x    & m_1 \le x \le m_2   \\
%       m_2   & x > m_2
% \end{array}\right
% \end{equation}

\subsubsection{Surrogate Distribution Construction}
\label{sd}
We decompose the surrogate distribution $Q(Z,Y)$ into two factors $Q(Z,Y)=Q(Y)Q(Z|Y)$, and describe the construction of two factors individually.

According to the data shift assumption, $P_T(Y)$ is similar to $P_S(Y)$, thus $Q(Y)$ should be similar to $P_S(Y)$. However, source distribution may suffer from the class imbalance problem, which will harm the performance on classes with fewer data. A common solution to this problem is class-balanced sampling, which samples data on each class uniformly. In this work, for the balance across different classes, the marginal distribution $P_S(Y)$ and $Q(Y)$ are both considered as uniform distribution. 

As for the second term, the conditional surrogate distribution $Q(Z|Y)$ is constructed by weighted sampling method. We need to construct the $Q (Z|Y)$ to calculate Eq. \ref{NWJ}, which takes the form of expectation, and only needs samples from $Q(Z|Y)$ to estimate. Instead of explicitly modeling $Q (Y |Z)$, we use the ideas of importance sampling. For each class,  the surrogate conditional distribution $Q(Z|y_j)$ is constructed by weighted sampling from target features. Thus $Q(Z|Y)$ is a distribution on target features$\{G(x_T^i)\}_{i=1}^{n_T}$, and parameterized by $W \in R^{n_T \times n_Y}$, where $n_Y$ is the number of labels: 
\begin{small}
\begin{equation}
    Q(G(x_T^i)|y_j)=W_{ij},   \textbf{ s.t. } W_{ij}\in[0,1], \sum_i W_{ij}=1,\forall j.
\end{equation}
\end{small}

Compared with pseudo-labeling, our estimation method has the following advantages:
(1) The surrogate marginal distribution of feature $Q (Z) = \sum_Y Q (Z | Y) P (Y)$ is not fixed, which enables us to select features more flexibly.
(2)The construction process of the surrogate distribution makes MI estimation $I(Z_1,Z_2)$ more convenient. Our surrogate distribution $Q (Z | Y)$ provides weights so that weighted sampling can be performed directly.  
% (3) Surrogate distribution model is more natural under the data shift hypothesis, which assumes that conditional data distribution is changed across domains while label marginal distribution remains the same. 

The challenge is to optimize the sampling probability weights $W_{ij}$ so as to minimize the bias of the surrogate distribution. 
We propose to optimize this distribution via Laplacian regularization as well as MI, which is explained in details in the following section.

\subsubsection{Surrogate Distribution Loss}
Inspired by semi-supervised learning, we expect that the surrogate distribution is consistent with the clustering structure of the feature distribution, based on the assumption that the feature is well-structured and clustered according to class, regardless of domains. We employ Laplacian regularization to capture the manifold clustering structure of feature distribution.
 
 Let $A \in R^{n_T \times n_T}$ be the adjacent matrix of target features, where the entry $A_{ij}$ measures how similar $G(x_T^i)$ and $G(x_T^j)$ are, and $D=\text{Diag}({A\boldsymbol{1}})$ is the degree matrix, i.e.  $D_{ii}=\sum_j A_{ij}$ and $D_{ij}=0,\forall i \ne j$. We construct A as K-nearest graph on target features, and 
%  $A_{i,j}=\left\{\begin{array}{ll}
% 1 & {G(x^t_i) \text{ is K-nearest to } G(x^t_j) \text{, or } G(x^t_j) \text{ is K-nearest to } G(x^t_i)}\\
% 0 & \text{otherwise}
% \end{array}\right.$ 
 the Laplacian regularization of $W$ is defined as 
 \begin{small}
 \begin{equation}
 \begin{aligned}
     L_{Laplacian}&=Tr(W^TLW)\\
     &=\frac{1}{2}\sum_k \sum_{i,j}A_{ij}(\frac{W_{ik}}{D_{ii}}-\frac{W_{jk}}{D_{jj}})^2,
\end{aligned}
 \end{equation}
 \end{small}
 where L is the normalized Laplacian matrix $L=I-D^{-\frac{1}{2}}AD^{-\frac{1}{2}}$. This regularization encourages $W_{ik}$ and $W_{jk}$ to be similar if feature $G(x_T^i)$ is similar to $G(x_T^j)$. It  also enables the conditional surrogate distribution to spread uniformly on a connected region.
%  This property is because that if the target features satisfy the ideal cluster assumption, i.e.$A_{i,j}=\boldsymbol{1}(y_T^i=y_T^j)$, then optimal $W$ w.r.t $L_{Laplacian}$ can recover each conditional distribution \cite{von2007tutorial}.

\subsubsection{Classification and Auxiliary Loss}
The model is optimized in supervised manner on the source domain. The classification loss is the standard cross-entropy loss via class-balanced sampling.
 \begin{small}
\begin{equation}
    L_{Classify}= -\frac{1}{n_Y} \sum_y E_{P_S(x|y)} \log P(F(G(x))=y).
\end{equation}
\end{small}
And we use auxiliary classification loss on pseudo-labels from the surrogate distribution, as the classifier will benefit from label information of the surrogate distribution. We use mean square error (MSE) for pseudo-labels, which is more robust to noise than cross entropy loss.
\begin{small}
\begin{equation}
   L_{Auxiliary}=\frac{1}{n_Y} \sum_y E_{Q(x|y)} (1-P(F(G(x))=y))^2.
\end{equation}
\end{small}

\subsubsection{Optimization of Surrogate Distribution}
We optimize both $L_{\textbf{Laplacian}}$ and $L_{\textbf{MI}}$ w.r.t. $W$ for a structured and informative surrogate distribution. At the beginning of each epoch, $W$ is initialized by K-means clustering and filtered by the distance to the clustering centers, i.e. $ \widetilde{W}_{i,j} = \boldsymbol{1}_{\mu_j \text{ nearest to } G(x_i)}·\boldsymbol{1}_{d(G(x_i),\mu_j)<\theta}$, where $\mu_j$ is the j-th clustering center during clustering, and normalized as ${W}_{i,j}=\frac{\widetilde{W}_{i,j}}{\sum_i \widetilde{W}_{i,j}}$. 

To minimize two losses w.r.t $W$, the gradients are derived analytically. The derivation is in appendix.

Based on the gradient of these two losses, we perform T-step descent update of $W$ with learning rate $\eta_1$ and $\eta_2$ respectively, and each step we project $W$ back to the probability simplex. See appendix for details.

\section{Experiments}

In this section, We evaluate the proposed method on three public domain
adaptation benchmarks, compared with recent state-of-the-art
UDA methods. We conduct extensive ablation study to discuss our method. 

\subsection{Datasets} 

VisDA-2017 \cite{peng2017visda} is a challenging benchmark for UDA with the domain shift from synthetic data to real imagery. It contains 152,397 training images and 55,388 validation images across 12 classes. Following the training and
testing protocol in \cite{long2017conditional}%Unsupervised Domain Adaptation with Hierarchical Gradient Synchronization
, the model is trained on labeled
training and unlabeled validation set and tested on the validation
set.

Office-31 \cite{saenko2010adapting} is a commonly used dataset for UDA, where images are collected from three distinct domains: Amazon (A), Webcam (W) and DSLR (D). The dataset consists of 4,110 images belonging
to 31 classes, and is imbalanced
across domains, with 2,817 images in A domain, 795 images
in W domain, and 498 images in D domain. Our method is evaluated on all six transfer tasks. We follow the standard protocol
for UDA \cite{long2017deep} 
 to use all labeled source
samples and all unlabeled target samples as the training
data.

Office-Home \cite{venkateswara2017deep} is another classical dataset with 15,500 images of 65 categories in office and home settings, consisting of 4 domains
including Artistic images (A), Clip Art images (C), Product
images (P) and Real-World images (R). Following the common protocol, all 65 categories from the four domains are used for evaluation
of UDA, forming 12 transfer
tasks.

\subsection{Implementation details}
For each transfer task, mean ($\pm$std) over 5 runs of the test accuracy are reported. We use the ImageNet pre-trained ResNet-50 \cite{he2016deep} without final classifier layer as the
encoder network $G$ for Office-31 and Office-Home, and ResNet-101 for VisDA-2017. The details of experiments are in appendix. The code is available at \href{https://github.com/zhao-ht/SIDA}{https://github.com/zhao-ht/SIDA}.

\subsection{Baselines}
We compare our approach with the state of the arts.
Domain alignment methods include  DAN \cite{long2015learning}, DANN \cite{ganin2016domain}, JAN \cite{long2017deep}. Class-level methods include conditional alignment methods (CDAN \cite{long2018conditional}, DCAN \cite{li2020domain}, ALDA \cite{chen2020adversarial}),  and contrastive methods (DRMEA \cite{luo2020unsupervised}, ETD \cite{li2020enhanced}, DADA \cite{tang2020discriminative}, SAFN \cite{xu2019larger}). We only report available results in each baseline. We use NA, DA, CA, CT to note no adaptation method, domain alignment methods, conditional alignment methods and contrastive methods respectively.

\begin{table}[htbp]
	\resizebox{\linewidth}{!}{
	\setlength{\tabcolsep}{0.5 mm}{
	\begin{tabular}{cc|ccccccccccccc}
		\hline
 	Type &	Methods & Plane & Bcycl & Bus   & Car   & Horse & Knife & Mcyle & Person & Plant & Sktbrd & Train & Truck & Avg   \\ 
 	\hline
	NA &	ResNet-101 & 55.1  & 53.3  & 61.9  & 59.1  & 80.6  & 17.9  & 79.7  & 31.2   & 81.0    & 26.5   & 73.5  & 8.5   & 52.4  \\
		\hline
		
	DA &	DAN        & 87.1  & 63.0   & 76.5  & 42.0    & 90.3  & 42.9  & 85.9  & 53.1   & 49.7  & 36.3   & 85.8  & 20.7  & 61.1  \\
	&	DANN       & 81.9  & 77.7  & 82.8  & 44.3  & 81.2  & 29.5  & 65.1  & 28.6   & 51.9  & 54.6   & 82.8  & 7.8   & 57.4  \\ 
	
	\hline
	
	CA &	CDAN       & 85.2  & 66.9  & 83.0    & 50.8  & 84.2  & 74.9  & 88.1  & 74.5   & 83.4  & 76.0     & 81.9  & 38.0    & 73.9  \\ 
	&	ALDA       & 93.8  & 74.1  & 82.4  & 69.4  & 90.6  & 87.2  & 89.0    & 67.6   & 93.4  & 76.1   & 87.7  & 22.2  & 77.8  \\
		
		\hline
	CT &	DRMEA      & 92.1  & 75.0    & 78.9  & 75.5  & 91.2  & 81.9  & 89.0    & 77.2   & 93.3  & 77.4   & 84.8  & 35.1  & 79.3  \\
		
	&	DADA & 92.9 & 74.2 & 82.5 & 65.0 & 90.9 & 93.8 & 87.2 & 74.2 & 89.9 & 71.5 & 86.5 & 48.7 & 79.8 \\
		
	&	SAFN & 93.6 & 61.3 & 84.1 & 70.6 & 94.1 & 79.0 & 91.8 & 79.6 & 89.9 & 55.6 & 89.0 & 24.4 & 76.1 \\

	    \hline

% 		SHOT       & 94.3  & 88.5  & 80.1  & 57.3  & 93.1  & 94.9  & 80.7  & 80.3   & 91.5  & 89.1   & 86.3  & 58.2  & 82.9  \\ \hline
	Ours &	SIDA       & 95.4 & 83.1 & 77.1 & 64.6 & 94.5 & 97.2 & 88.7 & 78.4  & 93.8 & 89.9  & 85.2 & 59.4 & \textbf{84.0} \\ \hline
	\end{tabular}}}
	\caption{Accuracy (\%) on VisDA-2017}
	\label{visda}
\end{table}

\begin{table}[htbp]
\resizebox{\linewidth}{!}{
\setlength{\tabcolsep}{1.0 mm}{
	\begin{tabular}{cc|ccccccc}
		\hline
	Type	& Methods        & A$\rightarrow$W       & D$\rightarrow$W      & W$\rightarrow$D       & A$\rightarrow$D       & D$\rightarrow$A       & W$\rightarrow$A       & avg   \\ \hline
NA	&	ResNet-50  & 68.4$\pm$0.2 & 96.7$\pm$0.1 & 99.3$\pm$0.1 & 68.9$\pm$0.2 & 62.5$\pm$0.3 & 60.7$\pm$0.3 & 76.1\\
		
		\hline
		
	\multirow{3}{*}{DA} &	DAN   & 80.5$\pm$0.4 & 97.1$\pm$0.2  & 99.6$\pm$0.1 & 78.6$\pm$0.2 & 63.6$\pm$0.3 & 62.8$\pm$0.2 & 80.4\\
	&	DANN  & 82.0$\pm$0.4 & 96.9$\pm$0.2 & 99.1$\pm$0.1 & 79.7$\pm$0.4 & 68.2$\pm$0.4 & 67.4$\pm$0.5 & 82.2\\
	&	JAN  & 85.4$\pm$0.3 & 97.4$\pm$0.2 & 99.8$\pm$0.2 & 84.7$\pm$0.3 & 68.6$\pm$0.3 & 70.0$\pm$0.4 & 84.3\\
% 		STAFF    & 96.4      & 99.6     & 99.8      & 94.0        & 71.7      & 70.2      & 88.6\\
		\hline
		
\multirow{3}{*}{CA}	& CDAN  & 94.1$\pm$0.1 & 98.6$\pm$0.1& 100.0$\pm$0.0 & 92.9$\pm$0.2& 71.0$\pm$0.3 &69.3 $\pm$
		0.3 &87.7\\
% 		SPL       & 92.7      & 98.7     & 99.8      & 93.0        & 76.4      & 76.8      & 89.6  \\
	&	DCAN  & 95.0 & 97.5 & 100.0 & 92.6 & 77.2 & 74.9 & 89.5  \\ 
% 		IA      & 90.3$\pm$0.2  & 98.7$\pm$0.1 & 99.8$\pm$.0   & 92.1$\pm$0.5  & 75.3$\pm$0.2  & 74.9$\pm$0.3  & 88.8  \\
	&	ALDA      & 95.6$\pm$0.5  & 97.7$\pm$0.1 & 100.0$\pm$0.0 & 94.0$\pm$0.4  & 72.2$\pm$0.4  & 72.5$\pm$0.2  & 88.7  \\
		
		\hline
% 		HoMM     & 91.7$\pm$0.3  & 98.8$\pm$0.0 & 100.0$\pm$0.0 & 89.1$\pm$0.3  & 71.2$\pm$0.2  & 70.6$\pm$0.3  & 86.9  \\
		
\multirow{3}{*}{CT}	&	ETD & 92.1 & 100.0 & 100.0 & 88.0 & 71.0 & 67.8 & 86.2 \\
		
	&	DADA & 92.3$\pm$0.1 & 99.2$\pm$0.1 & 100.0$\pm$0.0 & 93.9$\pm$0.2 & 74.4$\pm$0.1 & 74.2$\pm$0.1 & 89.0\\
		
	&	SAFN & 90.3 & 98.7 & 100.0 & 92.1 & 73.4 & 71.2 & 87.6 \\
		
		\hline

	Ours	& SIDA & 94.5$\pm$0.6 & 99.2$\pm$0.1 & 100.0$\pm$0.0   & 95.7$\pm$0.3 & 76.6$\pm$0.6 & 76.2$\pm$0.4 & \textbf{90.4} \\ \hline
	\end{tabular}}}
	       \caption{Accuracy(\%) on Office-31}
	       \label{office31}
	\vspace{-10pt}
\end{table}

\begin{table}[htbp]
% \begin{table}
    \small
	\resizebox{1\linewidth}{!}{
	\setlength{\tabcolsep}{0.5 mm}{
	\begin{tabular}{cc|ccccccccccccc}
		\hline
	Type &	Methods             & A$\rightarrow$C    & A$\rightarrow$P    & A$\rightarrow$R    & C$\rightarrow$A    & C$\rightarrow$P                     & C$\rightarrow$R    & P$\rightarrow$A    & P$\rightarrow$C    & P$\rightarrow$R    & R$\rightarrow$A                     & R$\rightarrow$C    & R$\rightarrow$P    & Avg   \\ 
		
		\hline
		
	NA &	ResNet-50 & 34.9 & 50.0 & 58.0 & 37.4 & 41.9 & 46.2 & 38.5 & 31.2 & 60.4 & 53.9 & 41.2 & 59.9 & 46.1\\
\hline
		
	\multirow{3}{*}{DA} &	DAN                & 43.6    & 57.0    & 67.9    & 45.8    & 56.5                     & 60.4    & 44.0    & 43.6    & 67.7    & 63.1                     & 51.5    & 74.3    & 56.3 \\
	&	DANN               & 45.6    & 59.3    & 70.1    & 47.0    & 58.5                     & 60.9    & 46.1    & 43.7    & 68.5    & 63.2                     & 51.8    & 76.8    & 57.6 \\
	&	JAN                & 45.9    & 61.2    & 68.9    & 50.4    & 59.7                     & 61.0    & 45.8    & 43.4    & 70.3    & 63.9                     & 52.4    & 76.8    & 58.3 
		\\ 
% 		STAFF  & 53.3 & 71.9 & 80.2 & 63.1 & 69.8 & 74.1 & 65.3 & 50.9 & 77.8 & 73.1 & 56.6 & 82.4 & 68.2\\

		\hline
	\multirow{3}{*}{CA}  &	CDAN               & 50.7    & 70.6    & 76.0    & 57.6    & 70.0                     & 70.0    & 57.4    & 50.9    & 77.3    & 70.9                     & 56.7    & 81.6    & 65.8 \\
% 		SPL                & 54.5    & 77.8    & 81.9    & 65.1    & 78.0                     & 81.1    & 66.0    & 53.1    & 82.8    & 69.9                     & 55.3    & 86.0    & 71.0 \\
	&	DCAN               & 54.5    & 75.7    & 81.2    & 67.4    & 74.0                     & 76.3    & 67.4    & 52.7    & 80.6    &  74.1                     & 59.1    & 83.5    & 70.5 \\
% 		IA & 56.2    & 77.9    & 79.2    & 64.4    & 73.1                     & 74.4    & 64.2    & 54.2    & 79.9    & 71.2                     & 58.1    & 83.1    & 69.5\\
	&	ALDA               & 53.7    & 70.1    & 76.4    & 60.2    & 72.6                     & 71.5    & 56.8    & 51.9    & 77.1    & 70.2                     & 56.3    & 82.1    & 66.6 \\
		
		\hline
		 
	\multirow{3}{*}{CT} &	DRMEA & 52.3 & 73.0 & 77.3 & 64.3 & 72.0 & 71.8 & 63.6 & 52.7 & 78.5 & 72.0 & 57.7 & 81.6 & 68.1 \\

    &    ETD & 51.3 & 71.9 & 85.7 & 57.6 & 69.2 & 73.7 & 57.8 & 51.2 & 79.3 & 70.2 & 57.5 & 82.1 & 67.3\\
        
    &    SAFN & 54.4 & 73.3 & 77.9 & 65.2 & 71.5 & 73.2 & 63.6 & 52.6 & 78.2 & 72.3 & 58.0 & 82.1 & 68.5\\

% 		\hline
% 		MCS                & 55.9    & 73.8    & 79.0    & 57.5    & 69.9                     & 71.3    & 58.4    & 50.3    & 78.2    & 65.9                     & 53.2    & 82.2    & 66.3 \\

% % 		GSP                & 56.8    & 75.5    & 78.9    & 61.3    & 69.4                     & 74.9    & 61.3    & 52.6    & 79.9    & 73.3                     & 54.2    & 83.2    & 68.4 \\

% 		STAFF              & 53.3    & 71.9    & 80.2    & 63.1    & 69.8                     & 74.1    & 65.3    & 50.9    & 77.8    & 73.1                     & 56.6    & 82.4    & 68.2 \\
% 		DRMEA              & 52.3    & 73.0    & 77.3    & 64.3    & 72.0                     & 71.8    & 63.6    & 52.7    & 78.5    & 72.0                     & 57.7    & 81.6    & 68.1 \\
		\hline
	Ours &	SIDA                & 57.2     & 79.1     & 81.7    & 67.1   &  74.5 
		  & 77.3   & 67.2    & 53.9    & 82.5    &  71.4   & 58.7    & 83.3     & \textbf{71.2} \\ \hline
	\end{tabular}}}
	\label{Tab2}

	      \caption{Accuracy (\%) on Office-Home}
	      \label{officehome}
  
\end{table}

\subsection{Results and Comparative Analysis}

In this section we will present our results and compare with other methods for evaluation on three standard benchmarks mentioned earlier. We report average classification accuracies  with standard deviations. Results of other methods are collected from original papers or the follow-up work. We provide visualizations of the features learned by the model in the appendix.

\textbf{VisDA-2017}\ \ \  Table \ref{visda} summarizes our experimental results on the  challenging VisDA-2017 dataset. For fair comparison, all methods listed here use ResNet-101 as the backbone network. Note that SIDA  outperforms baseline models with an average accuracy of 84.0, surpassing the previous best result reported by +4\%.

\textbf{Office-31}\ \ \ The unsupervised adaptation results on six Office-31 transfer tasks based on ResNet-50 are reported in Table \ref{office31}. As the data reveals, the average accuracy of SIDA is 90.4,  the best among all compared methods. It is noteworthy that our proposed method substantially improves the classification accuracy on hard transfer tasks, e.g. W$\rightarrow$A, A$\rightarrow$D, and D$\rightarrow$A, where source and target data are not similar. Our model also achieves comparable classification
performance on easy transfer tasks, e.g. D$\rightarrow$W,
W$\rightarrow$D, and A$\rightarrow$W. Our improvements are mainly on hard settings.

\textbf{Office-Home}\ \ \ Results on Office-Home using ResNet-50 backbone are reported in Table \ref{officehome}. It can be observed that SIDA exceeds all compared methods on most transfer tasks with an average accuracy of 71.2. The performance reveals the importance of maximizing MI between feature in difficult domain-adaptation tasks which contain more categories.

In summary, our surrogate MI maximization approach achieves competitive performance compared to traditional alignment based methods and recent pseudo-label based methods for UDA. It underlines the validity of using information theory methods for UDA via MI maximization.

\begin{table}[htbp]
\resizebox{\linewidth}{!}{
\setlength{\tabcolsep}{3.0 mm}{
	\begin{tabular}{cc|ccccc}
		\hline
		\small
		MI & SD & A$\rightarrow$W                       & A$\rightarrow$D                       & D$\rightarrow$A                       & W$\rightarrow$A                       & Avg   \\ \hline
		${\times}$   &  ${\times}$ & 90.25 $\pm$ 0.2                   & 92.37  $\pm$ 0.1                   & 74.21  $\pm$   0.2                & 74.09  $\pm$ 0.1                  & 82.7 \\
		${\times}$ & $\surd$     & 92.08$\pm$ 0.3 & 94.28$\pm$0.3 & 74.23$\pm$0.9 & 74.74$\pm$ 0.8 & 83.8 \\
		$\surd$    &${\times}$ & 94.03$\pm$ 0.1 & 95.28$\pm$ 0.1 & 75.86$\pm$ 0.4 & 75.72 $\pm$ 0.5 & 85.2 \\ \hline
		$\surd$    & $\surd$     & 94.52 $\pm$ 0.6                   & 95.68  $\pm$  0.1                 & 76.62 $\pm$    0.6                & 76.22 $\pm$    0.4                & 85.8 \\ \hline
	\end{tabular}}}
	\caption{Ablation Study}
	\label{ablation}
	\vspace{-10pt}
\end{table}

% \subsection{Visualization}
% Our visualization experiment is carried out on $D \rightarrow A$ and $W \rightarrow A$ tasks in the dataset Office-31, which are the two most difficult tasks in Office-31. The baselines we chose were ResNet-50 pre-trained on ImageNet and a typical conditional alignment method, CDAN \cite{long2018conditional}. We take representation before the final linear classification layer as feature vectors. We use t-SNE to visualize the features. Detials and more results are in appendix.

% Figure \ref{D-A_in_paper} shows that SIDA can make the features of different categories more distinguishable, which is the consequence of maximizing MI among features from the same category. Discriminable features can be easier for classification, as the visualization shows.

% \begin{figure}[htbp]
	
% 	\subfloat{\includegraphics[width=0.3\columnwidth]{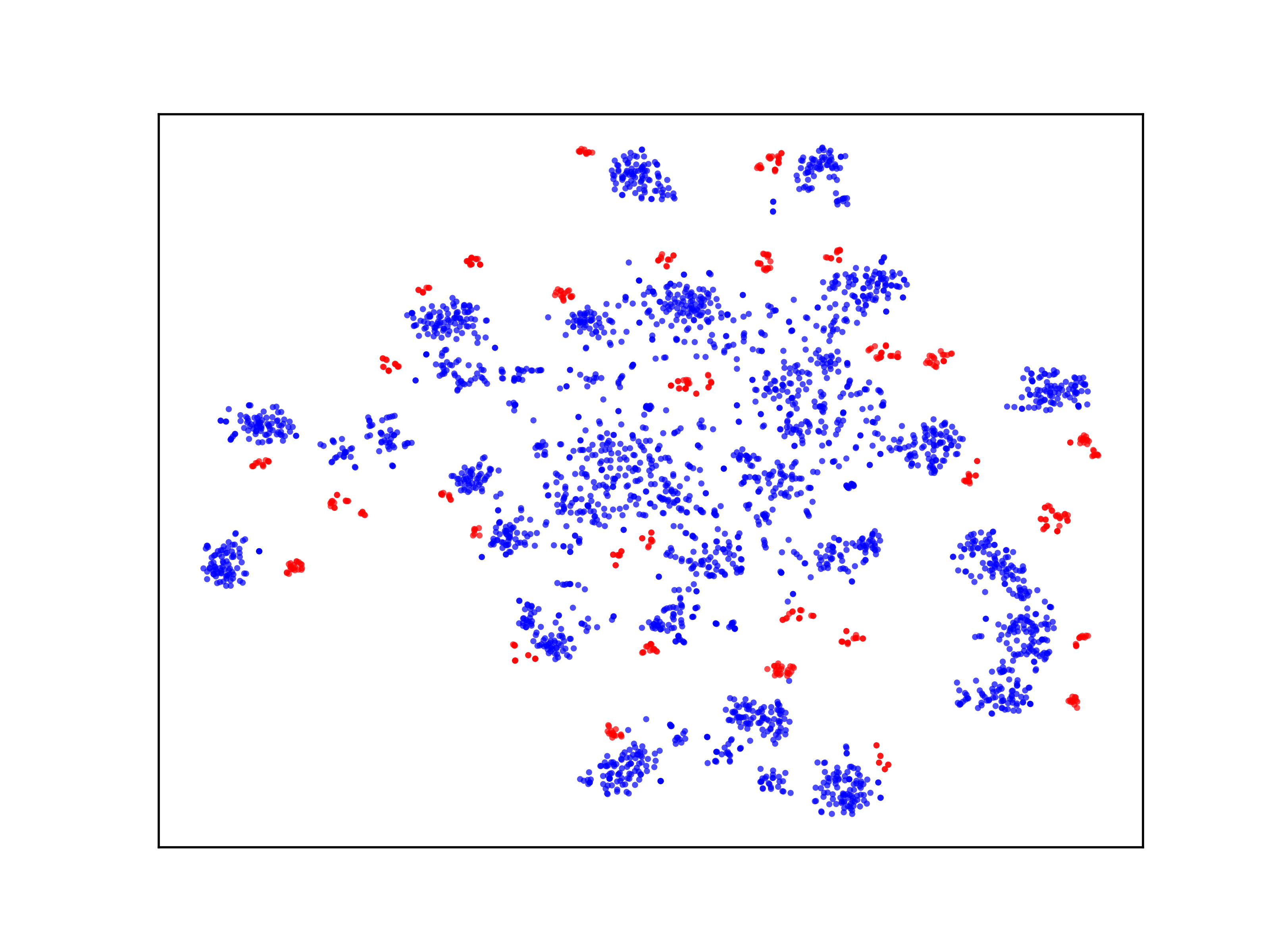}}
% 	\subfloat{\includegraphics[width=0.3\columnwidth]{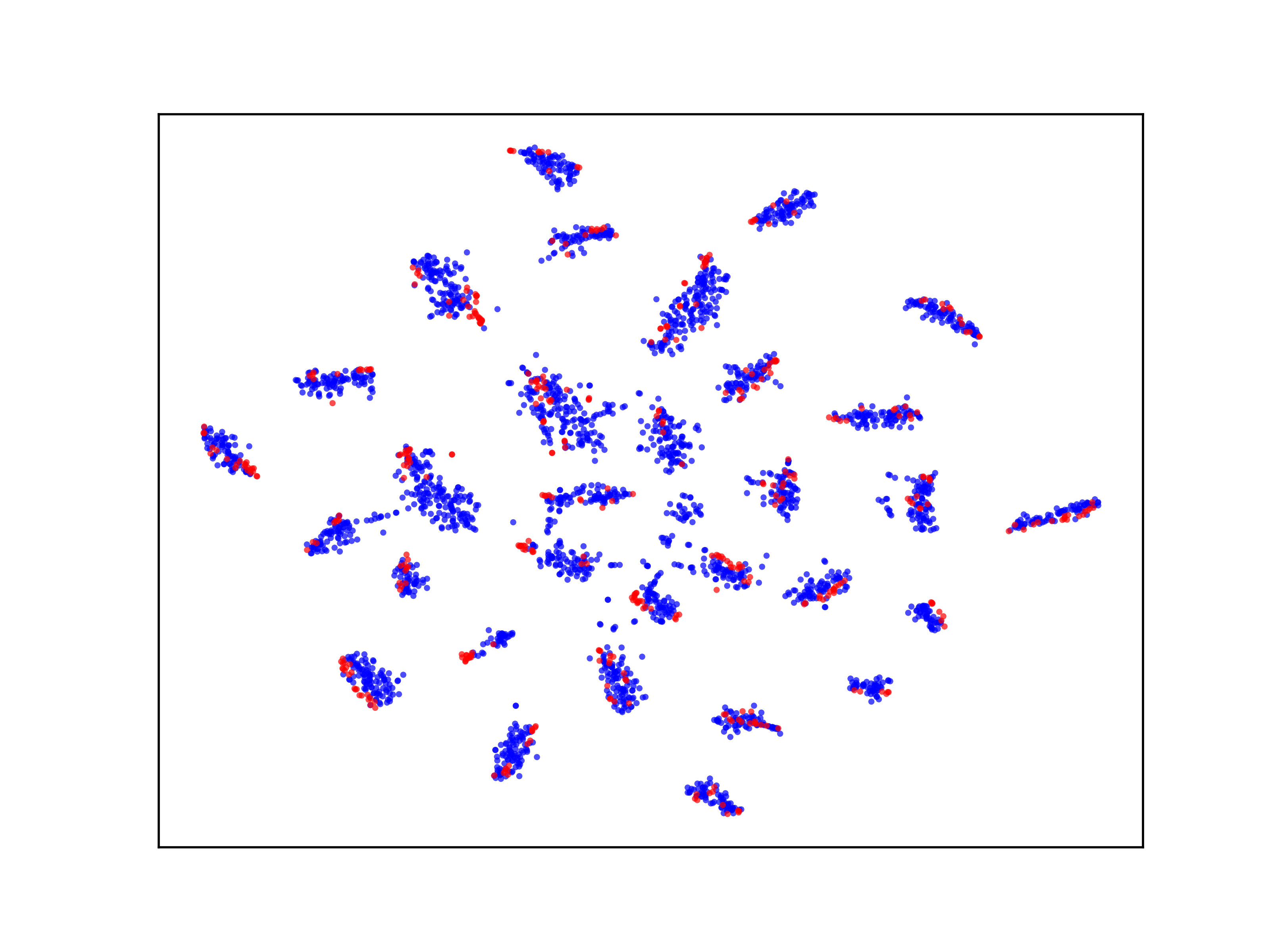}}
% 		\subfloat{\includegraphics[width=0.3\columnwidth]{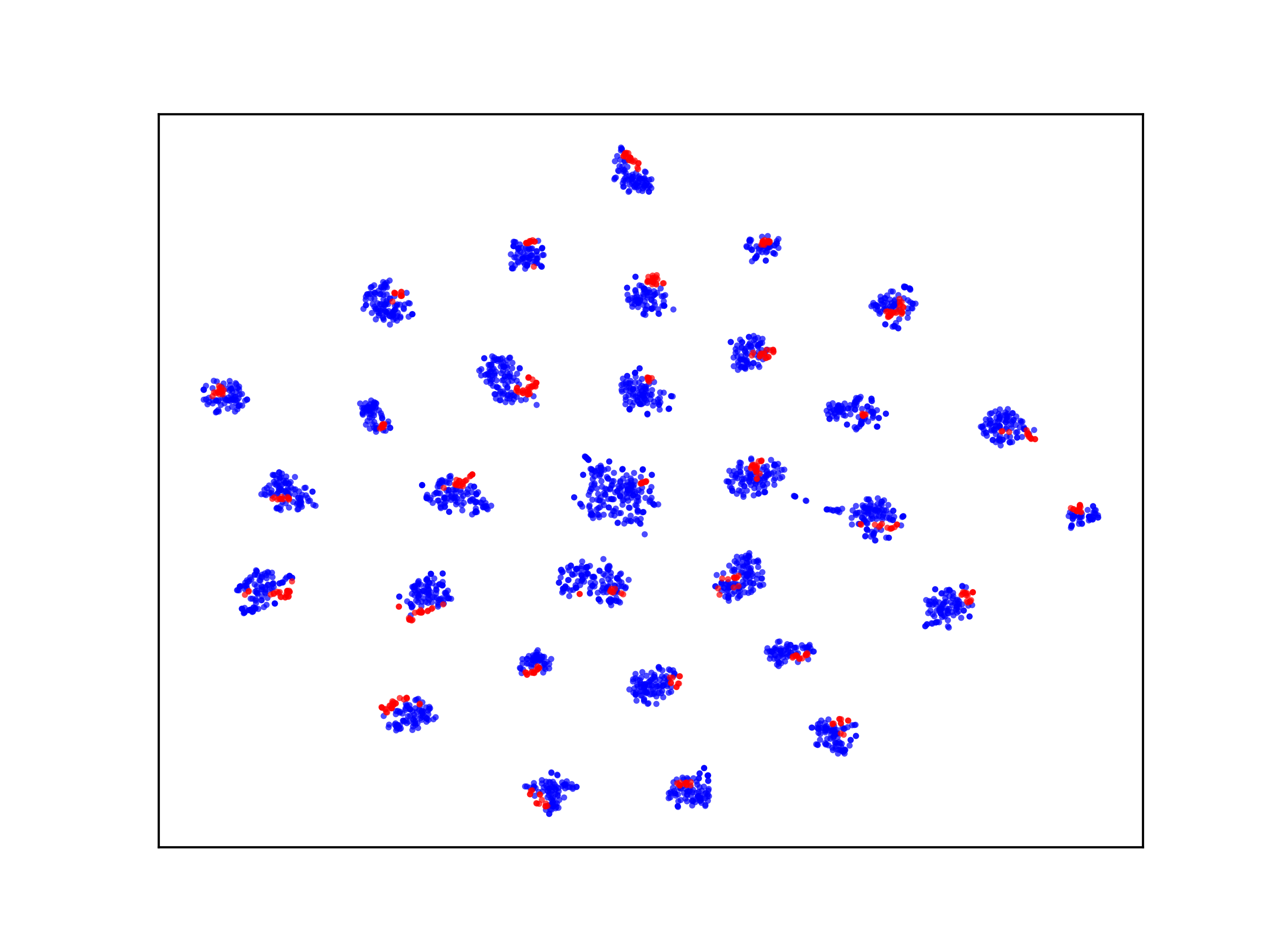}}
% 	\caption{From left to right are the feature visualizations on task D$\rightarrow$A of ResNet50, CDAN, and SIDA, respectively. Red represents source domain, and blue representsthe target domain. }
% 	\label{D-A_in_paper}
% \end{figure}

\subsection{Ablation Study}
In this section, to evaluate  how different components of our work contribute to the final performance, we conduct ablation study for SIDA on Office-31. We mainly focus on harder transfer tasks, e.g. A$\rightarrow$W , A$\rightarrow$D, D$\rightarrow$A and W$\rightarrow$A. We investigate different combinations of two components:MI maximization and surrogate distribution (SD). Note that without surrogate distribution, we use pseudo label computed by the same method as surrogate distribution initialization to estimate MI. The average classification accuracy on four tasks are in Table \ref{ablation}. 

From the results, we can observe that
the model with MI maximization outperforms the base model without the two components by about 2.5\% on average, which demonstrates the
effectiveness of the  maximization strategy. The surrogate distribution also improves the average performance by 1.1\% compared to base model, confirming that the surrogate distribution improves the estimation quality of target domain compared to pseudo label method. The combination of two components yields the highest improvement.

% \textbf{Does MI Optimization work?} Our main arguments centers on minimizing the risk upper bound w.r.t. the MI. When we only consider the classification loss, the model Comparing 

% \textbf{Distribution smoothing is necessary}

\section{Conclusion and Future Work}

In this work, we introduce a novel framework of unsupervised domain adaptation and provide theoretical analysis to validate our optimization objectives. Experiments show that our approach gives competitive results compared to state-of-the-art unsupervised adaptation methods on standard domain adaptation tasks. One unresolved problem is to integrate the domain discrepancy in target risk upper bound into mutual information framework. This problem is left for future work.

\bibliographystyle{named}
% \bibliography{iclr2022_conference}
\bibliography{iclr2022_conference_sim}
% \bibliography{ref}

\clearpage

\onecolumn
\section{Appendix}

\subsection{Proof for Theorem 4.1}

\begin{theorem}[Bound of Target Domain expected risk]
	The expected risk on target domain can be upper-bounded by the negative MI between features, and $\mathcal{H} \Delta \mathcal{H}$ -divergence between features of two domains:
	\begin{equation}
	\setlength{\abovedisplayskip}{4pt}
	\setlength{\belowdisplayskip}{4pt}
	\epsilon_{P^G_T}(F) 
	\leq \epsilon_{P^G_S}(F)-4I_{S+T}^G(Z_1;Z_2)+ \frac{1}{2} d_{\mathcal{H} \Delta \mathcal{H}}(P^G_S(Z),P^G_T(Z))+4H(Y)
	\end{equation}

\end{theorem}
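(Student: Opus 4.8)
The plan is to combine the classical Ben-David $\mathcal{H}\Delta\mathcal{H}$ decomposition with an information-theoretic bound on the combined (non-transferable) error. First I would rewrite the risk in a more convenient form: expanding the $L_1$ norm shows $\epsilon_P(F) = 2\operatorname{Pr}_P[F(Z)\neq Y]$, i.e. the risk is twice the $0/1$ error. Since the $0/1$ error is linear in the joint distribution and $P^G_{S+T}=\frac12(P^G_S+P^G_T)$, this immediately yields the key identity $\epsilon_{P^G_S}(F)+\epsilon_{P^G_T}(F)=2\,\epsilon_{P^G_{S+T}}(F)$ for every hypothesis $F$, which is what lets the source/target errors collapse onto the mixture where the mutual information lives.

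Next I would run the standard chain of triangle inequalities. Using the pointwise $L_1$ triangle inequality (treating $P(y|z)$ as a soft label and $\delta_{F^\ast(z)}$ as a hard one) for an auxiliary hypothesis $F^\ast\in\mathcal{H}$, I get $\epsilon_{P^G_T}(F)\le \epsilon_{P^G_T}(F,F^\ast)+\epsilon_{P^G_T}(F^\ast)$ and $\epsilon_{P^G_S}(F,F^\ast)\le \epsilon_{P^G_S}(F)+\epsilon_{P^G_S}(F^\ast)$. The disagreement term transfers across domains at the cost of the divergence, $\epsilon_{P^G_T}(F,F^\ast)\le \epsilon_{P^G_S}(F,F^\ast)+\frac12 d_{\mathcal{H}\Delta\mathcal{H}}(P^G_S(Z),P^G_T(Z))$, directly from the definition (the factor $\frac12$ absorbing the $2$ in the divergence). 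Assembling these gives $\epsilon_{P^G_T}(F)\le \epsilon_{P^G_S}(F)+\frac12 d_{\mathcal{H}\Delta\mathcal{H}}(P^G_S(Z),P^G_T(Z))+\lambda$ with combined error $\lambda=\epsilon_{P^G_S}(F^\ast)+\epsilon_{P^G_T}(F^\ast)$.

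The heart of the argument is bounding $\lambda$ by entropy. I would take $F^\ast=\tilde F$ to be the Bayes classifier of the mixture, $\tilde F(z)=\arg\max_y P^G_{S+T}(y|z)$; then by the identity above $\lambda=2\,\epsilon_{P^G_{S+T}}(\tilde F)=4\,\mathbb{E}_{P^G_{S+T}(z)}\big[\,1-\max_y P^G_{S+T}(y|z)\,\big]$, i.e. four times the mixture Bayes error. I would bound this Bayes error pointwise by conditional entropy through $1-\max_y p_y\le -\log\max_y p_y\le H(p)$ (the first step is $-\log(1-e)\ge e$ with $e=1-\max_y p_y$; the second is min-entropy $\le$ Shannon entropy), giving $\lambda\le 4H^G_{S+T}(Y|Z)$. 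Finally, noting that $Z_1\!-\!Y\!-\!Z_2$ is a Markov chain under $P^G_{S+T}$, the data-processing inequality yields $I^G_{S+T}(Z_1;Z_2)\le I^G_{S+T}(Z;Y)=H(Y)-H^G_{S+T}(Y|Z)$, so $4H^G_{S+T}(Y|Z)\le 4H(Y)-4I^G_{S+T}(Z_1;Z_2)$; substituting completes the bound. Here I use $H(Y)=H_{S+T}(Y)$, valid since $P_S(Y)=P_T(Y)$.

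The step I expect to be the main obstacle is controlling the combined error $\lambda$: one must (i) argue that the combined error of the mixture-Bayes classifier is the correct quantity — this is where the $L_1$/soft-label bookkeeping and the realizability assumption $\tilde F\in\mathcal{H}$ (needed so that $\tilde F$ is admissible inside $d_{\mathcal{H}\Delta\mathcal{H}}$) enter — and (ii) pass from a Bayes-error quantity to the feature–feature mutual information. The decisive link is recognizing the Markov structure $Z_1\!-\!Y\!-\!Z_2$, so that data processing turns $I^G_{S+T}(Z_1;Z_2)$ into a lower bound on $I^G_{S+T}(Z;Y)$, i.e. into an upper bound on the residual conditional entropy; the factors of $4$ then emerge cleanly from the factor $2$ in the risk definition times the factor $2$ from splitting the mixture.
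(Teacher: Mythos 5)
Your proposal is correct and follows essentially the same route as the paper's proof: the Ben-David-style triangle-inequality decomposition with the $\frac12 d_{\mathcal{H}\Delta\mathcal{H}}$ transfer term, choosing the auxiliary hypothesis to be the Bayes classifier of the mixture $P^G_{S+T}$, bounding the combined error via $1-\max_y p_y \le -\log\max_y p_y \le H(p)$ (the paper writes this through $I^G_{S+T}(Z;Y)=H_{S+T}(Y)-H_{S+T}(Y|Z)$ and the argmax inequality), and finally lowering $I^G_{S+T}(Z;Y)$ to $I^G_{S+T}(Z_1;Z_2)$ via the chain rule with $I(Z_1;Z_2|Y)=0$, which is exactly your data-processing step for the Markov chain $Z_1\!-\!Y\!-\!Z_2$. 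Your explicit identity $\epsilon_{P^G_S}(F)+\epsilon_{P^G_T}(F)=2\epsilon_{P^G_{S+T}}(F)$ and your remark that $F^\ast$ must lie in $\mathcal{H}$ are just cleaner statements of bookkeeping the paper performs inline (and leaves implicit, respectively).
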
 

\begin{proof}
	
	The risk can be relaxed by triangle inequality

	\begin{equation}
	\begin{aligned}
	\epsilon_{P^G_T} (F)&\leq
	\epsilon_{P^G_T} (F')+\epsilon_{P^G_T} (F,F')\\
	&=  \epsilon_{P^G_T} (F')+\epsilon_{P^G_T} (F,F')+\epsilon_{P^G_S} (F,F')-\epsilon_{P^G_S} (F,F') \\
	&\leq \epsilon_{P^G_T} (F')+\epsilon_{P^G_T} (F,F')+\epsilon_{P^G_S} (F)+\epsilon_{P^G_S}(F')-\epsilon_{P^G_S} (F,F')\\
	&\leq \epsilon_{P^G_S} (F)+\epsilon_{P^G_T} (F')+\epsilon_{P^G_S}(F')+|\epsilon_{P^G_T} (F,F')-\epsilon_{P^G_S} (F,F')|\\
	&\leq \epsilon_{P^G_S} (F)+\epsilon_{P^G_T} (F')+\epsilon_{P^G_S}(F')+ \frac{1}{2} d_{\mathcal{H} \Delta \mathcal{H}}(P^G_S(Z),P^G_T(Z))
	\end{aligned}
	\end{equation}
	
	For the term $\epsilon_{P^G_T} (F')+\epsilon_{P^G_S}(F')$, we have 
	\begin{equation}
	\begin{aligned}
	\epsilon_{P^G_T} (F')+\epsilon_{P^G_S}(F') & = 
	\mathbb{E}_{P^G_S(z)} |P^G_S(y|z)-\delta_{F'(z)}|_1+\mathbb{E}_{P^G_T(z)} |P^G_T(y|z)-\delta_{F'(z)}|_1 \\
	& =2\mathbb{E}_{P^G_S(z)} (1-P^G_S(F'(z)|z))+2\mathbb{E}_{P^G_T(z)} (1-P^G_T(F'(z)|z)) \\
	& = 2(2-\sum_{z} P^G_S(z)(P^G_S(F'(z)|z))-\sum_{z} P^G_T(z)(P^G_T(F'(z)|z))) \\
	&= 2(2-\sum_z P^G_S(F'(z),z) -\sum_z P^G_T(F'(z),z)) \\
	&= 2(2-2\sum_z P^G_{S+T}(F'(z),z))\\
	&= 4(1-\sum_z P^G_{S+T}(z)P^G_{S+T}(F'(z)|z))\\
	&= 4(\sum_z P^G_{S+T}(z) (1-P^G_{S+T}(F'(z)|z)))\\
	&=4\mathbb{E}_{P^G_{S+T}(z)}(1-P^G_{S+T}(F'(z)|z))\\
	& \le -4\mathbb{E}_{P^G_{S+T}(z)}\log P^G_{S+T}(F'(z)|z)
	\end{aligned}
	\end{equation}
	
	While $F'$ can be any classifier, define $F'$ as the optimal classifier on $P^G_{S+T}$, i.e. $F'(z)=\arg\max_y P^G_{S+T}(y|z)$.
	
	Recall the definition of MI,
	\begin{equation}
	\begin{aligned}
	I^G_{S+T}(Z;Y)&=H_{S+T}(Y)-H_{S+T}(Y|Z)\\
	& = H_{S+T}(Y)+\mathbb{E}_{P^G_{S+T(z)}}\mathbb{E}_{P^G_{S+T}(y|z)}\log P^G_{S+T}(y|z)\\
	& \le  H_{S+T}(Y)+\mathbb{E}_{P^G_{S+T(z)}}\mathbb{E}_{P^G_{S+T}(y|z)}\log P^G_{S+T}(F'(z)|z)\\
	&= H_{S+T}(Y)+\mathbb{E}_{P^G_{S+T(z)}}\log P^G_{S+T}(F'(z)|z)
	\end{aligned}
	\end{equation}
	
	Which means that 
	
	\begin{equation}
	\begin{aligned}
	\epsilon_{P^G_T} (F')+\epsilon_{P^G_S}(F') 
	\le & -4\mathbb{E}_{P^G_{S+T}(z)}\log P^G_{S+T}(F'(z)|z) \\
	& \le -4I^G_{S+T}(Z;Y)+4H_{S+T}(Y)
	\end{aligned}
	\end{equation}
	
	According to the MI chain rule, $I (Z_1; Y, Z_2)=I (Z_1; Y)+I (Z_1; Z_2|Y)=I (Z_1; Z_2)+I (Z_1; Y | Z_2)$. Since $Z_1$ and $Z_2$ are two samples from class $Y$, $Z_1$ and $Z_2$ are independent for a given $Y$, i.e., $I (Z_1; Z_2 | Y) = 0$. So we can get $I (Z_1; Y)=I (Z_1; Z_2)+I (Z_1; Y | Z_2)$. Because $I (Z_1; Y|Z_2)\ge0$, we finally get 
	\begin{equation}
	I (Z_1; Y)\ge I (Z_1; Z_2)
	\end{equation}
	Note that $I (Z_1; Y)$ is $I (Z; Y)$, because $Z_1$ and $Z_2$ both follow distribution $P(Z|Y)$.
	
	So now we get the conclusion by
	
	\begin{equation}
	\begin{aligned}
	\epsilon_{P^G_T} (F')+\epsilon_{P^G_S}(F') 
	\le -4I^G_{S+T}(Z_1;Z_2)+4H_{S+T}(Y)
	\end{aligned}
	\end{equation}
	
	We give an explanation of the conditions for the upper bound to be equal. $I^G_{S+T}(Z_1;Z_2)$ is a lower bound of $I^G_{S+T}(Z;Y)$, and it measures how much uncertainty of $Y$ is reduced by knowing the feature, and it's equal to $H(Y)$ if and only if $P^G_{S+T}(Y|Z)$ is deterministic, i.e., $P^G_{S+T}(Y|Z)$ is $\delta$ distribution, which means $P^G_S(Y|Z)=P^G_T(Y|Z)=\delta_{Y(Z)}$. Thus if the $\mathcal{H} \Delta \mathcal{H}$ -divergence is zero, i.e., $P^G_S(Z)=P^G_T(Z)$, then it's ensured that $P^G_S(Z,Y)=P^G_T(Z,Y)$, and $\epsilon_{P^G_T}(F) 
	= \epsilon_{P^G_S}(F)$.

\end{proof}

\subsection{Proof for Theorem 4.2 }

\begin{theorem}[Bound Estimation with Surrogate Distribution]\label{maintheory}Let $B=d_1(P_T^G(Z),Q(Z))+ \epsilon_{P^G_T}(Q(Y|Z))$ be the bias of surrogate distribution $Q$ w.r.t target distribution. The expected risk on target domain can be upper-bounded by the negative surrogate MI between features, $\mathcal{H} \Delta \mathcal{H}$ -divergence between source and target domain, and additional bias of surrogate domain:
	\begin{equation}
	\setlength{\abovedisplayskip}{4pt}
	\setlength{\belowdisplayskip}{4pt}
	\epsilon_{P^G_T}(F) 
	\leq \epsilon_{P^G_S}(F)-4I_{S+Q}^G(Z_1;Z_2)+ B +\frac{1}{2} d_{\mathcal{H} \Delta \mathcal{H}}(P^G_S(Z),P^G_T(Z))+4H(Y)
	\end{equation}

\end{theorem}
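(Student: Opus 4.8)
The plan is to reduce Theorem \ref{maintheory} to Theorem 4.1 by re-running the earlier argument with the surrogate mixture $P^G_{S+Q}=\frac12(P^G_S+Q)$ in place of $P^G_{S+T}$, and charging the discrepancy between $Q$ and the true target distribution to the bias term $B$. First I would reuse verbatim the opening triangle-inequality step from the proof of Theorem 4.1, which holds for an \emph{arbitrary} auxiliary classifier $F'$ and yields $\epsilon_{P^G_T}(F)\le \epsilon_{P^G_S}(F)+\epsilon_{P^G_T}(F')+\epsilon_{P^G_S}(F')+\frac12 d_{\mathcal{H}\Delta\mathcal{H}}(P^G_S(Z),P^G_T(Z))$. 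The only modification is that I now take $F'$ to be the optimal classifier for the \emph{surrogate} mixture, $F'(z)=\arg\max_y P^G_{S+Q}(y\mid z)$, rather than for $P^G_{S+T}$.

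The heart of the argument is to replace the target risk $\epsilon_{P^G_T}(F')$ by the surrogate risk $\epsilon_Q(F')$ at the cost of $B$, which I would do with two successive triangle inequalities on the $L_1$ norm. Writing $\epsilon_{P^G_T}(F')=\mathbb{E}_{P^G_T(z)}|P^G_T(y\mid z)-\delta_{F'(z)}|_1$ and inserting the surrogate conditional $Q(y\mid z)$ inside the norm splits this into $\mathbb{E}_{P^G_T(z)}|P^G_T(y\mid z)-Q(y\mid z)|_1$, which is precisely $\epsilon_{P^G_T}(Q(Y\mid Z))$, plus $\mathbb{E}_{P^G_T(z)}|Q(y\mid z)-\delta_{F'(z)}|_1$. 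For the second piece I switch the integrating measure from $P^G_T(Z)$ to $Q(Z)$; since the integrand $|Q(y\mid z)-\delta_{F'(z)}|_1$ is a difference of probability vectors and hence lies in $[0,2]$, and since $d_1$ is by definition twice the supremal measure difference, the change of measure costs at most $d_1(P^G_T(Z),Q(Z))$. This produces $\epsilon_{P^G_T}(F')\le \epsilon_Q(F')+\epsilon_{P^G_T}(Q(Y\mid Z))+d_1(P^G_T(Z),Q(Z))=\epsilon_Q(F')+B$.

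With this in hand I would apply the core estimate of Theorem 4.1 to the pair $\epsilon_Q(F')+\epsilon_{P^G_S}(F')$. The algebra collapsing this sum into $4\,\mathbb{E}_{P^G_{S+Q}(z)}(1-P^G_{S+Q}(F'(z)\mid z))$ is identical to the target case once $Q$ replaces $P^G_T$, and because $F'$ was chosen to be the optimal classifier on $P^G_{S+Q}$, the same chain bounds it by $-4I^G_{S+Q}(Z;Y)+4H(Y)$, followed by $I^G_{S+Q}(Z;Y)\ge I^G_{S+Q}(Z_1;Z_2)$ via the mutual-information chain rule and the conditional independence $I(Z_1;Z_2\mid Y)=0$. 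Substituting $\epsilon_{P^G_T}(F')\le\epsilon_Q(F')+B$ and then this bound back into the opening inequality delivers the claimed target-risk bound.

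I expect the one genuinely delicate point to be the change-of-measure step: it requires reading the risk notation $\epsilon_{P^G_T}(Q(Y\mid Z))$ as the soft generalization $\mathbb{E}_{P^G_T(z)}|Q(y\mid z)-P^G_T(y\mid z)|_1$ (a ``hypothesis'' that is a conditional distribution rather than a hard label), and tracking the factor-of-two convention in the definition of $d_1$ so that the total-variation change-of-measure constant lands exactly on $d_1(P^G_T(Z),Q(Z))$ with no stray factor. Everything else is a faithful re-run of the Theorem 4.1 computation with $P^G_{S+Q}$ substituted for $P^G_{S+T}$.
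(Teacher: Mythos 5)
Your proposal is correct and follows essentially the same route as the paper's own proof: both charge the surrogate discrepancy to $B$ via a triangle inequality through $Q(Y\mid Z)$ plus a change of measure from $P^G_T(Z)$ to $Q(Z)$ bounded by $d_1$, and both then apply the Theorem 4.1 machinery to $\epsilon_{Q}(F')+\epsilon_{P^G_S}(F')$ with $F'$ optimal on $P^G_{S+Q}$ to obtain $-4I^G_{S+Q}(Z_1;Z_2)+4H(Y)$. The only difference is cosmetic ordering --- the paper inserts $Q(Y\mid Z)$ before introducing $F'$, whereas you run the Theorem 4.1 opening first and then decompose $\epsilon_{P^G_T}(F')$ --- and the terms produced are identical.
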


\begin{proof}
	The expected expected risk can be relaxed by triangle inequality:
	
	\begin{equation}
	\begin{aligned}
	\epsilon_{P^G_T} (F)&\leq \epsilon_{P^G_T} (Q(Y|Z))+\epsilon_{P^G_T} (F,Q(Y|Z)) \\
	&\le\epsilon_{P^G_T} (Q(Y|Z))+ \epsilon_{P^G_T} (F',Q(Y|Z))+\epsilon_{P^G_T} (F,F')\\
	&=\epsilon_{P^G_T} (Q(Y|Z))+  \epsilon_{P^G_T} (F',Q(Y|Z))+\epsilon_{P^G_T} (F,F')+\epsilon_{P^G_S} (F,F')-\epsilon_{P^G_S} (F,F') \\
	&\leq\epsilon_{P^G_T} (Q(Y|Z))+ \epsilon_{P^G_T} (F',Q(Y|Z))+\epsilon_{P^G_T} (F,F')+\epsilon_{P^G_S} (F)+\epsilon_{P^G_S}(F')-\epsilon_{P^G_S} (F,F')\\
	&\leq\epsilon_{P^G_T} (Q(Y|Z))+ \epsilon_{P^G_S} (F)+\epsilon_{P^G_T} (F',Q(Y|Z))+\epsilon_{P^G_S}(F')+|\epsilon_{P^G_T} (F,F')-\epsilon_{P^G_S} (F,F')|\\
	&\leq\epsilon_{P^G_T} (Q(Y|Z))+ \epsilon_{P^G_S} (F)+\epsilon_{P^G_T} (F',Q(Y|Z))+\epsilon_{P^G_S}(F')+ \frac{1}{2} d_{\mathcal{H} \Delta \mathcal{H}}(P^G_S(Z),P^G_T(Z))\\
	&=\epsilon_{P^G_T} (Q(Y|Z))+ \epsilon_{P^G_S} (F)+\epsilon_{P^G_T} (F',Q(Y|Z))-\epsilon_{Q} (F')+ \\ 
	&\quad\epsilon_{Q} (F')+\epsilon_{P^G_S}(F')+ \frac{1}{2} d_{\mathcal{H} \Delta \mathcal{H}}(P^G_S(Z),P^G_T(Z))\\
	&\leq\epsilon_{P^G_T} (Q(Y|Z))+ \epsilon_{P^G_S} (F)+ \int(P_T^G(z)-Q(z)) |\delta_{F'(z)}-Q(Y|z)|_1 dz +\\
	&\quad\epsilon_{Q} (F')+\epsilon_{P^G_S}(F')+ \frac{1}{2} d_{\mathcal{H} \Delta \mathcal{H}}(P^G_S(Z),P^G_T(Z))\\
	&\leq\epsilon_{P^G_T} (Q(Y|Z))+ \epsilon_{P^G_S} (F)+ d_1(P_T^G(Z),Q(Z)) +\epsilon_{Q} (F')+\epsilon_{P^G_S}(F')+ \frac{1}{2} d_{\mathcal{H} \Delta \mathcal{H}}(P^G_S(Z),P^G_T(Z))\\
	&= \epsilon_{P^G_S} (F)+ B +\epsilon_{Q} (F')+\epsilon_{P^G_S}(F')+ \frac{1}{2} d_{\mathcal{H} \Delta \mathcal{H}}(P^G_S(Z),P^G_T(Z))
	\end{aligned}
	\end{equation}

	By the same method as the previous proof, terms $\epsilon_{Q} (F')+\epsilon_{P^G_S}(F')$ can be deduced into MI $-4I_{S+Q}^G(Z_1;Z_2)+4H(Y)$.
	
\end{proof}

\subsection{Proof for the Equality Condition of MI Estimation}

\begin{proposition} 
	The following MI lower bound holds
	
	\begin{equation}
	\setlength{\abovedisplayskip}{4pt}
	\setlength{\belowdisplayskip}{4pt}
	I(Z_1;Z_2) \ge \mathbb{E}_{P(z_1, z_2)}[f(z_1, z_2)]-e^{-1} \mathbb{E}_{P(z_1)}[\mathbb{E}_{P(z_2)}\left[e^{f(z_1, z_2)}\right]]
	\end{equation}
	where $f$ is arbitrary function in $\mathcal{Z} \times \mathcal{Z} \rightarrow R$. The equality holds when $\frac{e^{f(z_1,z_2)}}{\mathbb{E}_{P(z_1)}e^{f(z_1,z_2)}}=\frac{P(z_1|z_2)}{P(z_1)}$ and $\mathbb{E}_{P(z_1)}\mathbb{E}_{P(z_2)}e^{f(z_1,z_2)}=e$.
\end{proposition}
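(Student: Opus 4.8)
The plan is to reduce the statement to a single pointwise elementary inequality and then integrate it against the joint law. First I would rewrite the mutual information as the expectation of a log density ratio: setting $r(z_1,z_2) = \frac{P(z_1,z_2)}{P(z_1)P(z_2)}$, we have $I(Z_1;Z_2) = \mathbb{E}_{P(z_1,z_2)}[\log r(z_1,z_2)]$. The whole claim then becomes a comparison between $\mathbb{E}[\log r]$ and $\mathbb{E}[f] - e^{-1}\mathbb{E}[e^f]$ under the appropriate measures, so it suffices to produce a pointwise bound linking $\log r$, $f$, and $e^{f}/r$.

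The key step is the convexity inequality $e^{t-1} \ge t$ for all real $t$, which is just the tangent-line bound to the exponential at $t=1$ (equivalently $e^{t}\ge e\,t$), with equality exactly when $t=1$. I would apply it with $t = f(z_1,z_2) - \log r(z_1,z_2)$, which gives $e^{-1}e^{f}/r \ge f - \log r$ pointwise, i.e. $\log r \ge f - e^{-1}e^{f}/r$.

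Next I would take $\mathbb{E}_{P(z_1,z_2)}$ of both sides. The left side is exactly $I(Z_1;Z_2)$. On the right, the first term is $\mathbb{E}_{P(z_1,z_2)}[f]$, and in the second the density ratio cancels: $\mathbb{E}_{P(z_1,z_2)}[e^{f}/r] = \int P(z_1,z_2)\frac{P(z_1)P(z_2)}{P(z_1,z_2)}e^{f} = \mathbb{E}_{P(z_1)}\mathbb{E}_{P(z_2)}[e^{f}]$, which is precisely the claimed term. This establishes the inequality.

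For the equality conditions, equality in the tangent-line step holds iff $t=1$ almost surely, i.e. $f^{*} = 1 + \log r$. The remaining, mildly delicate, task is to show that this single identity is equivalent to the two stated conditions. I would observe that any $f$ of the form $c + \log r$ already satisfies condition one: since $\mathbb{E}_{P(z_1)}e^{f} = e^{c}$ (the $P(z_1)$ integral collapses $P(z_1,z_2)$ to $P(z_2)$), the normalized ratio $e^{f}/\mathbb{E}_{P(z_1)}e^{f}$ equals $r = P(z_1|z_2)/P(z_1)$ for every $c$. Thus condition one fixes only the shape of $f$ up to an additive constant, while condition two, $\mathbb{E}_{P(z_1)}\mathbb{E}_{P(z_2)}e^{f} = e$, forces $e^{c}=e$, hence $c=1$, recovering $f^{*}$. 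The main obstacle is really this equivalence bookkeeping rather than the bound itself, which is a one-line convexity argument.
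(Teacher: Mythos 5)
Your proof of the inequality itself is correct, and it takes a genuinely different route from the paper's. The paper proceeds in two stages: first the variational (Barber--Agakov) bound $I(Z_1;Z_2)\ge \mathbb{E}_{P(z_1,z_2)}\left[\log \frac{q(z_1\mid z_2)}{P(z_1)}\right]$, instantiated with the energy-based choice $q(z_1\mid z_2)=\frac{P(z_1)}{Z(z_2)}e^{f(z_1,z_2)}$, $Z(z_2)=\mathbb{E}_{P(z_1)}[e^{f(z_1,z_2)}]$; then the tangent bound $\log x\le \frac{x}{a}+\log a-1$ with $a=e$ to linearize the log-partition term. You instead apply the single pointwise inequality $e^{t-1}\ge t$ at $t=f-\log r$ (where $r=\frac{P(z_1,z_2)}{P(z_1)P(z_2)}$) and integrate against $P(z_1,z_2)$, letting the density ratio cancel in the second term. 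This is the Fenchel-dual (f-divergence) derivation of the same NWJ bound: one convexity step in place of a KL-nonnegativity step plus a Jensen/tangent step, and it makes the source of slack completely explicit, namely equality iff $f=1+\log r$ holds $P(z_1,z_2)$-almost surely.

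Your equality bookkeeping, however, contains a slip. The first stated condition, $e^{f}/\mathbb{E}_{P(z_1)}[e^{f}]=r$, only forces $e^{f(z_1,z_2)}=Z(z_2)\,r(z_1,z_2)$ for some positive function $Z$ of $z_2$; that is, it fixes $f$ up to an additive \emph{function of $z_2$}, not up to an additive constant as you claim. The second condition then gives only $\mathbb{E}_{P(z_2)}[Z(z_2)]=e$, not $Z\equiv e$. Under the two conditions the right-hand side of the bound equals $I(Z_1;Z_2)+\mathbb{E}_{P(z_2)}[\log Z(z_2)]-1$, and Jensen gives $\mathbb{E}_{P(z_2)}[\log Z(z_2)]\le \log\mathbb{E}_{P(z_2)}[Z(z_2)]=1$ with equality only when $Z$ is constant; so the two stated conditions do not by themselves imply equality (for a counterexample take $Z_1$ independent of $Z_2$, $f(z_1,z_2)=g(z_2)$ with $\mathbb{E}_{P(z_2)}[e^{g}]=e$ but $\mathbb{E}_{P(z_2)}[g]<1$). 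The clean statement your argument actually delivers is: equality holds iff $f=1+\log r$ a.s., which in turn implies both stated conditions. To be fair, this looseness originates in the proposition as written, and the paper's own proof never verifies its equality claim at all, so your treatment is no weaker than the paper's; but the equivalence you assert in your last paragraph is not correct as stated.
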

\begin{proof}
	The proof is as follows:
	
	\begin{equation}
	I(Z_1 ; Z_2)= \mathbb{E}_{P(z_1, z_2)}\left[\log \frac{q(z_1 \mid z_2)}{P(z_1)}\right] 
	+\mathbb{E}_{P(z_2)}[K L(P(z_1 \mid z_2) \| q(z_1 \mid z_2))] \ge \mathbb{E}_{P(z_1, z_2)}\left[\log \frac{q(z_1 \mid z_2)}{P(z_1)}\right]
	\end{equation}
	
	, where q is arbitrary variational distribution. Let $q(z_1 \mid z_2)=\frac{P(z_1)}{Z(z_2)} e^{f(z_1, z_2)}$, where $Z(z_2)=\mathbb{E}_{P(z_1)}\left[e^{f(z_1, z_2)}\right]$ is the normalization constant.
	
	Then 
	
	\begin{equation}
	I(Z_1 ; Z_2) \ge \mathbb{E}_{P(z_1, z_2)}[f(z_1, z_2)]-\log \mathbb{E}_{P(z_2)}[Z(z_2)]
	\end{equation}
	
	By $\log (x) \leq \frac{x}{a(x)}+\log (a(x))-1$, which is
	tight when $a(x) = x$, 
	
	\begin{equation}
	I(Z_1 ; Z_2) \ge 
	\mathbb{E}_{P(z_1, z_2)}[f(z_1, z_2)]
	-\mathbb{E}_{P(z_2)}\left[\frac{\mathbb{E}_{P(z_1)}\left[e^{f(z_1, z_2)}\right]}{a(z_2)}+\log (a(z_2))-1\right]
	\end{equation}
	
	Let $a(z_2)=e$, we get the final form of lower bound:
	
	\begin{equation}
	I(Z_1;Z_2) \ge \mathbb{E}_{P(z_1, z_2)}[f(z_1, z_2)]-e^{-1} \mathbb{E}_{P(z_2)}[\mathbb{E}_{P(z_1)}\left[e^{f(z_1, z_2)}\right]]
	\end{equation}
	
\end{proof}

\subsection{Details for Surrogate Distribution Optimization}

Let $P$  be the conditional distribution matrix of source domain, i,e $P_{ij}=P_S(x_S^i|y^j_S)$, and $W$ be the conditional distribution matrix of surrogate distribution $Q$. Let $M=\frac{1}{2}\left[\begin{array}{l}
P \\
W
\end{array}\right]$ be the conditional distribution matrix of mixture distribution $P_{S+T}$. Let $S$ is the score function matrix, i.e. $S_{i,j}=f(G(x^i),G(x^j))$, $i,j =1,\dots,n_S+n_T$, where $f$ is the score function of the MI lower bound. With class-balanced sampling,  $L_{MI}$ can be represented as follows:

\begin{equation}L_{MI}=
-(\frac{1}{n_Y} Tr(M^TSM)-\frac{1}{en_Y^2} \textbf{1}^T M^T e^S M \textbf{1})
\end{equation}

The gradient w.r.t M is 

\begin{equation}
\nabla_M L_{MI}=-2(\frac{1}{n_Y}SM-\frac{1}{en_Y^2}e^SM \textbf{1}\textbf{1}^T)
\end{equation}

And thus the gradient w.r.t $W$ is 

\begin{equation}
\nabla_W L_{MI}=-(\frac{1}{n_Y}\left(\textbf{0},I\right)SM-\frac{1}{en_Y^2}\left(\textbf{0},I\right)e^SM \textbf{1}\textbf{1}^T)
\end{equation}

Where $\left(\textbf{0},I\right) \in R^{n_T,n_S+n_T}$,$I$ is identity matrix with size  $n_T$.

In practice, we find it harmful to minimize $L_{MI}$ by $\nabla_W L_{MI}$ directly, because it will encourage the distribution to concentrate on only a few samples rapidly. We thus adjust the  decent direction of $L_{MI}$ to update the distribution slowly. Let $|\nabla_W L_{Laplacian}|$ be the entry-wise absolute value of $\nabla_W L_{Laplacian}$. The descent directions are: 
$d_1=-\nabla_W L_{Laplacian}$ and $
d_2=-|\nabla_W L_{Laplacian}|\odot(\nabla_W L_{MI})$, where $\odot$ is entry-wise multiplication. Due the property of $\nabla_W L_{Laplacian}$ which is high on the margin of each conditional distribution, this yield a diffusion-like update of $W$, which prevent rapid collapse of surrogate distribution.

Therefore, the update rule of $W$ is 
$
W^{k+1}=\Pi (W^{k}+\eta_1 d_1^k+\eta_2 d_2^k))$, where $\Pi$ is the projection operator onto probability simplex for each column of W. $\eta_1$ and $\eta_2$ are learning rate. The iteration is performed T times.

\subsection{Implementation details}

For each transfer task, mean ($\pm$std) over 5 runs of the test accuracy are reported. We use the ImageNet \cite {deng2009imagenet} pre-trained ResNet-50 \cite{he2016deep} without final classifier layer as the
encoder network $G$ for Office-31 and Office-Home, and ResNet-101 for VisDA-2017. Following \cite{kang2019contrastive}, the final classifier layer of ResNet is replaced with
the task-specific fully-connected layer to parameterize the classifier $F$, and domain-specific batch normalization parameters are used.  Code is attached in supplementary materials.

The model is trained in the finetune protocol, where the learning rate of the classifier layer is 10 times that
of the encoder, by mini-batch stochastic gradient descent (SGD) algorithm
with momentum of 0.9 and weight decay of $5e-4$. The learning rate schedule follows \cite{long2017deep,long2015learning,ganin2015unsupervised}, where the learning rate $\eta_{p}$ is adjusted following $\eta_{p}=\frac{\eta_{0}}{(1+a p)^{b}}$,
where p is the normalized training progress from 0 to 1. $\eta_{0}$ is the initial
learning rate, i.e. 0.001 for the encoder layers and 0.01
for the classifier layer. For Office-31 and Office-Home, a = 10 and
b = 0.75, while for VisDA-2017, a = 10 and b = 2.25. The coefficients of $L_{\textbf{MI}}$ and $L_{\textbf{Auxilary}}$ are $\alpha_1=0.3,\alpha_2=0.1$ for Office-31, $\alpha_1=1.3,\alpha_2=1.0$ for Office-Home, and $\alpha_1=3.0,\alpha_2=1.0$ for VisDA-2017. The hyperparameters of surrogate distribution optimization include K-nearest Graph $K=3$, number of iterations  $T=3$, learning rate $\eta_1=0.5$ and $\eta_2=0.05$.

Experiments are conducted with Python3 and Pytorch. The model is trained on single NVIDIA GeForce RTX 2080 Ti graphic card. For Office-31, each epoch takes 80 seconds and 10 seconds to perform inference. Code is attached in supplementary materials.

\subsection{Visualization}
\begin{figure}[htbp]
	
	\subfloat{\includegraphics[width=0.5\columnwidth]{figures//emb8_8_office31_d2a_ressource110_75.png}}
	\subfloat{\includegraphics[width=0.5\columnwidth]{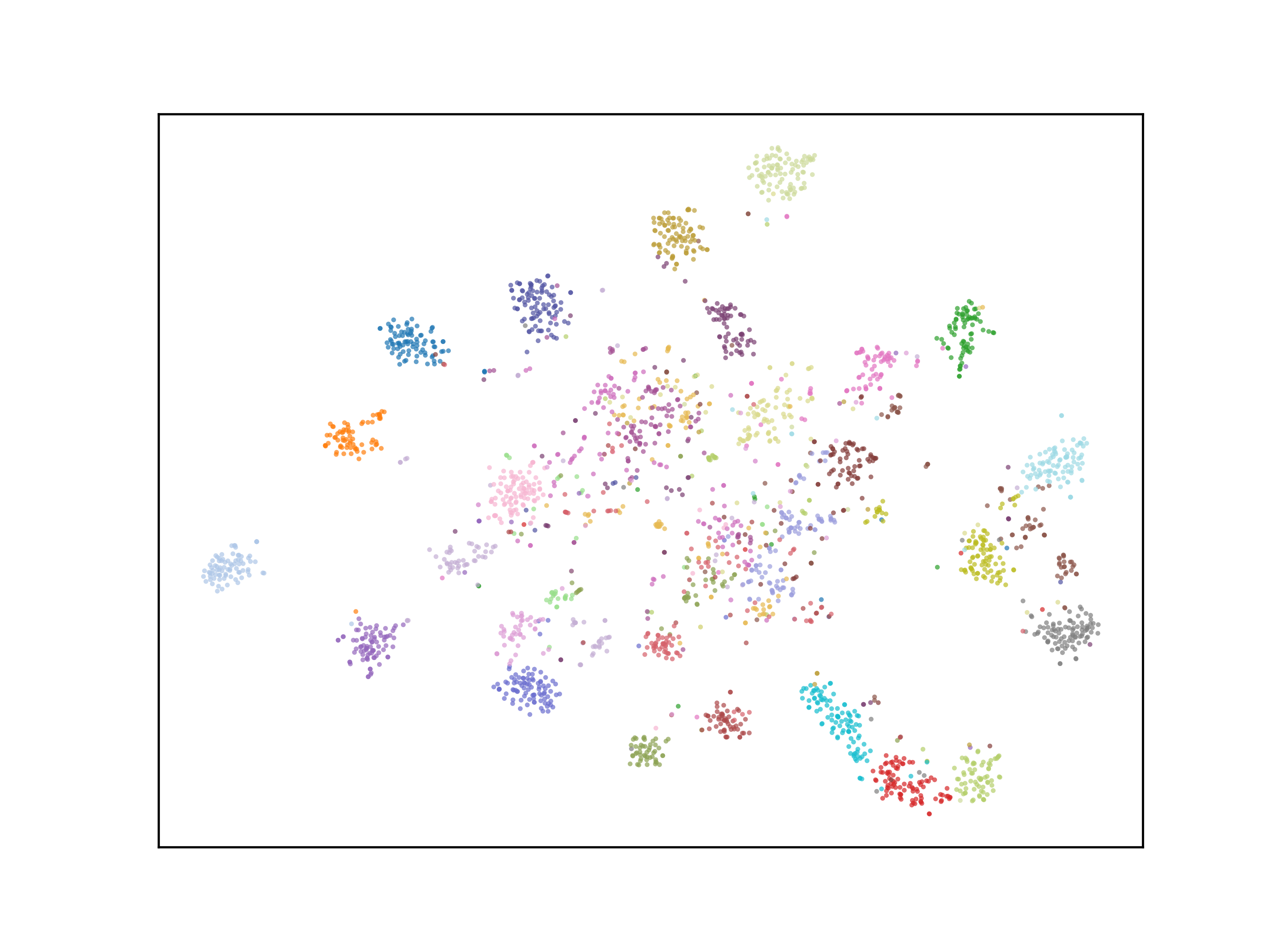}}\\
	\subfloat{\includegraphics[width=0.5\columnwidth]{figures//embd2asource609990_75.png}}
	\subfloat{\includegraphics[width=0.5\columnwidth]{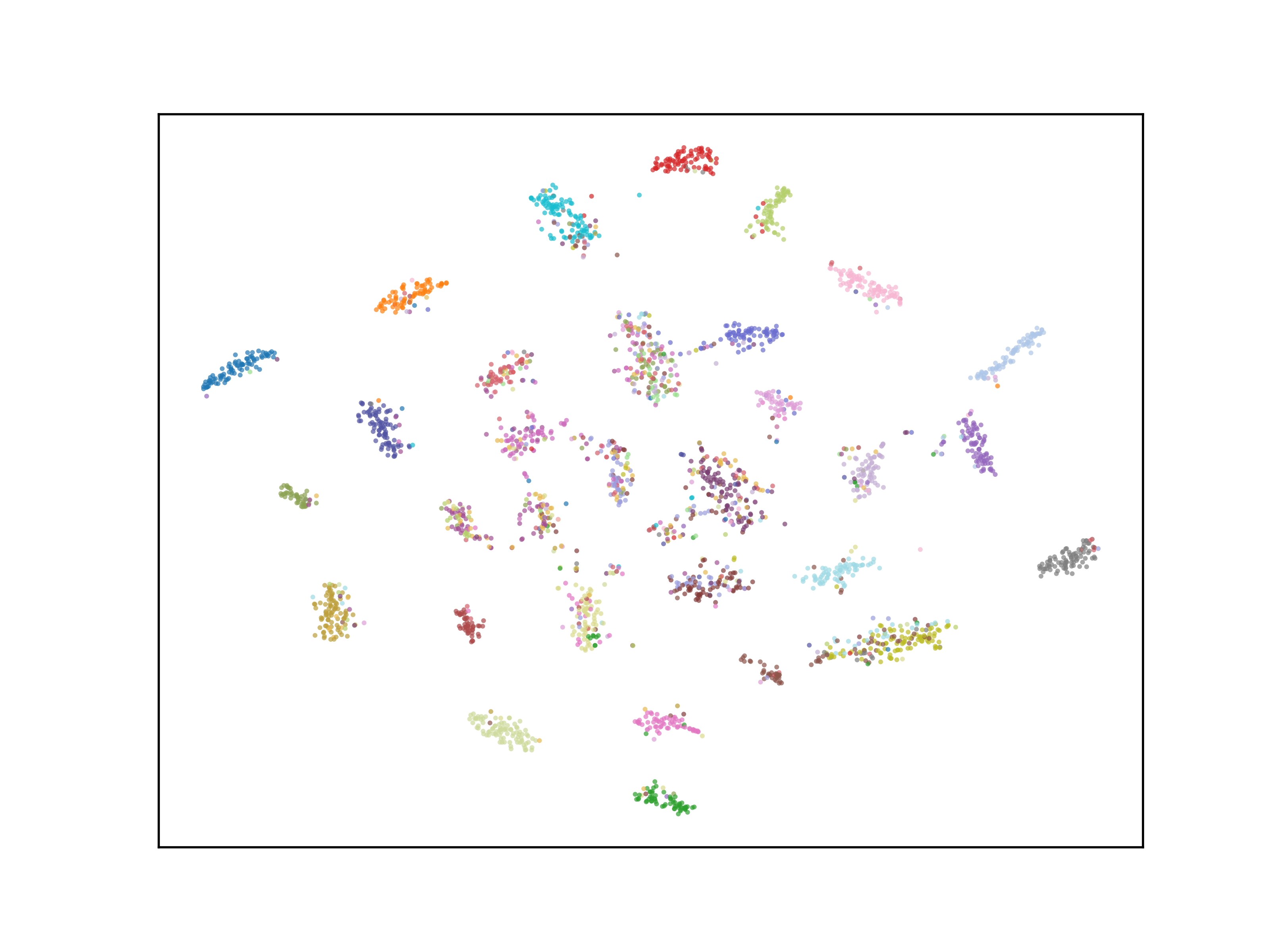}}\\
	\subfloat{\includegraphics[width=0.5\columnwidth]{figures//emb8_8_office31_d2asource800_75.png}}
	\subfloat{\includegraphics[width=0.5\columnwidth]{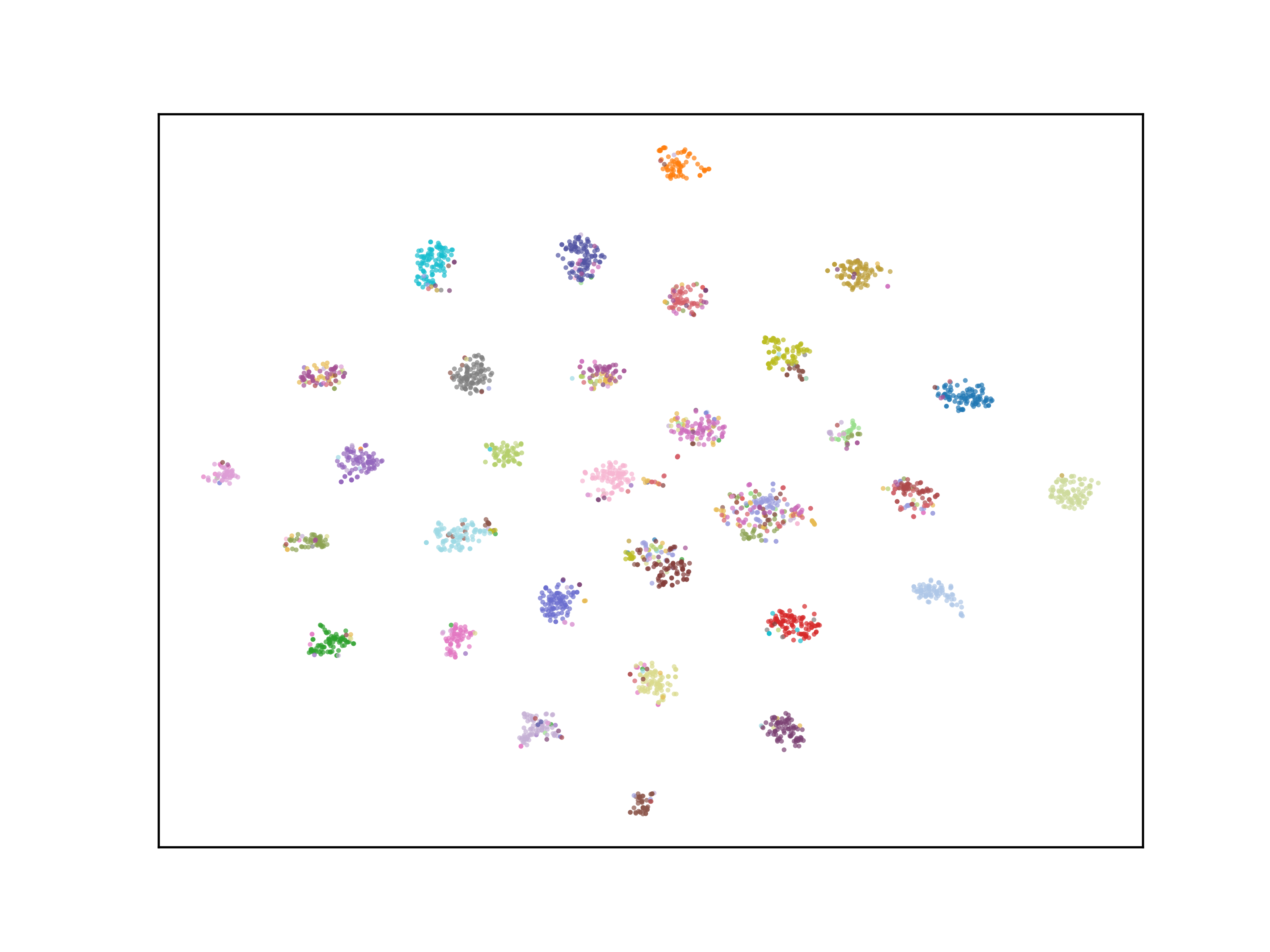}}\\
	\caption{From top to bottom are the feature visualizations on task D$\rightarrow$A of ResNet50, CDAN, and SIDA, respectively.}
	\label{D-A}
\end{figure}

\begin{figure}[htbp]
	
	\subfloat{\includegraphics[width=0.5\columnwidth]{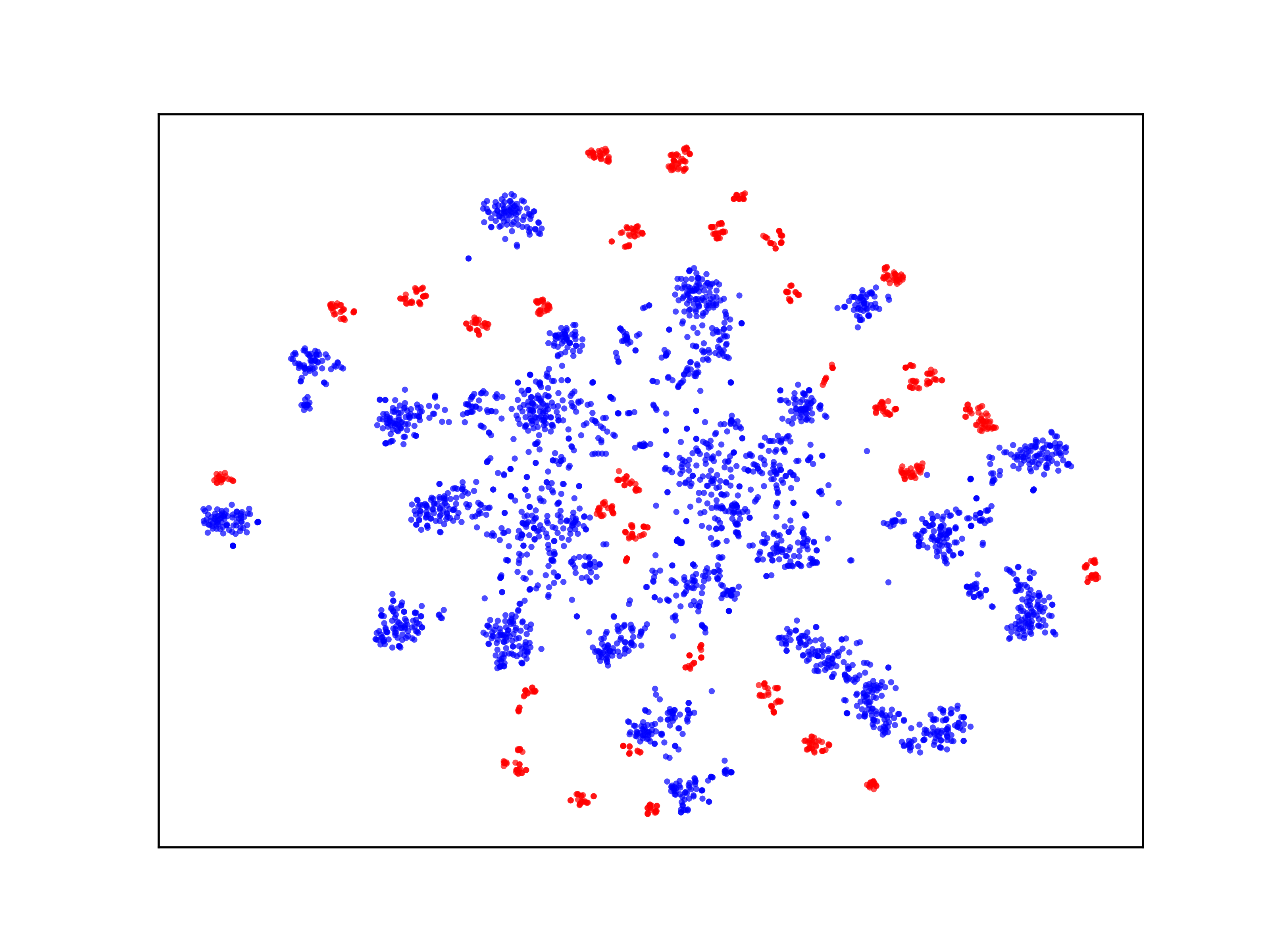}}
	\subfloat{\includegraphics[width=0.5\columnwidth]{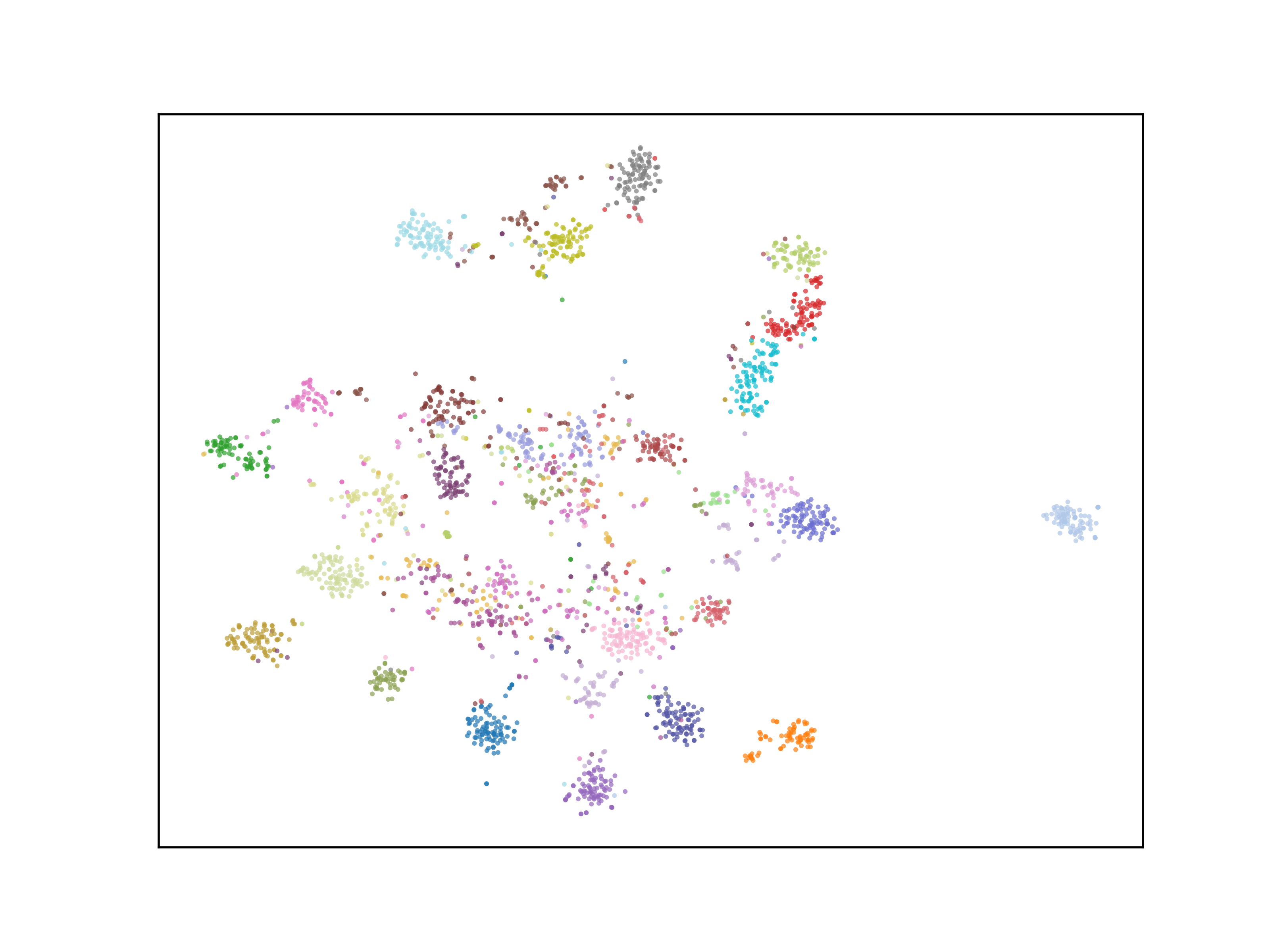}}\\
	\subfloat{\includegraphics[width=0.5\columnwidth]{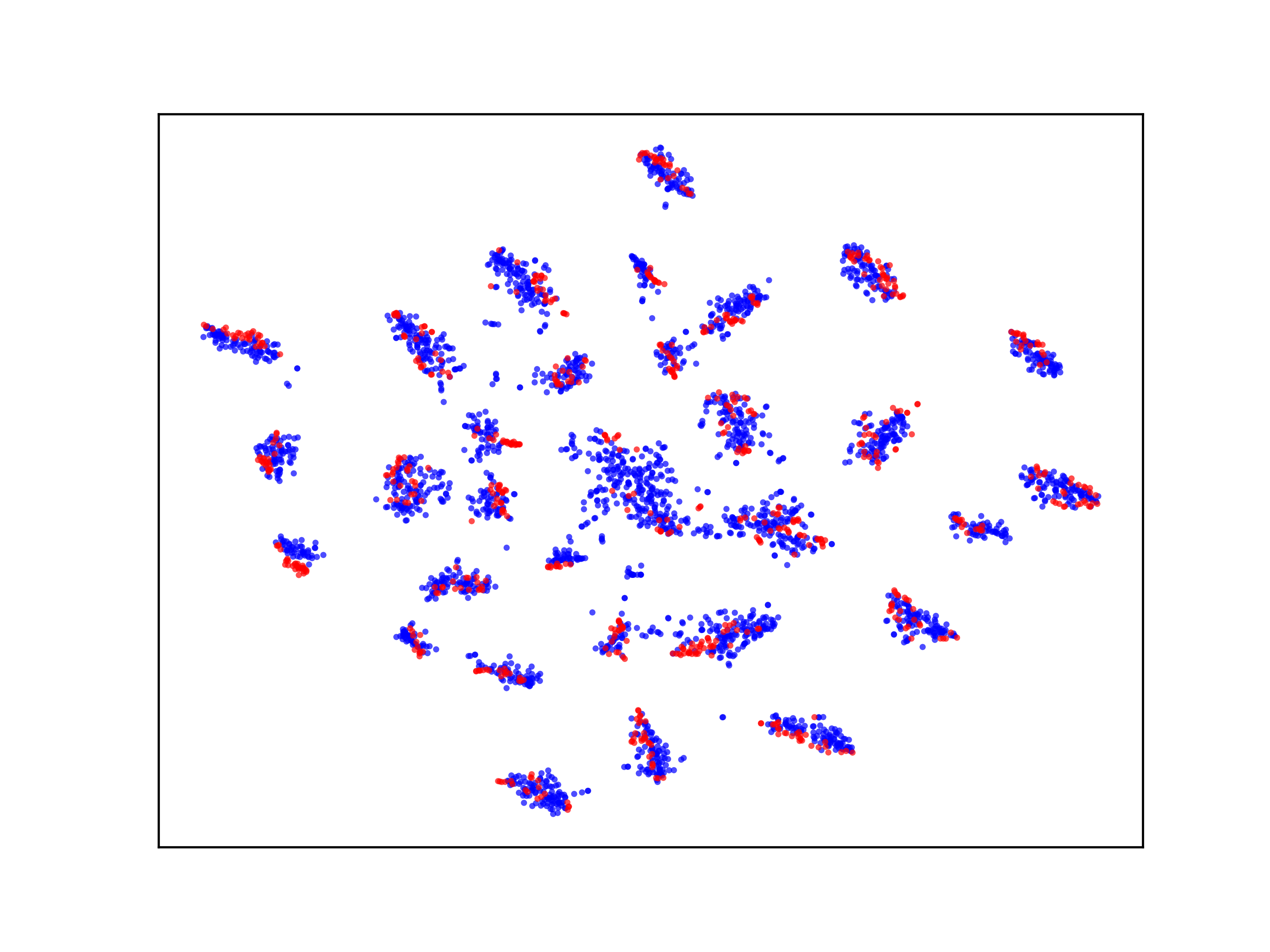}}
	\subfloat{\includegraphics[width=0.5\columnwidth]{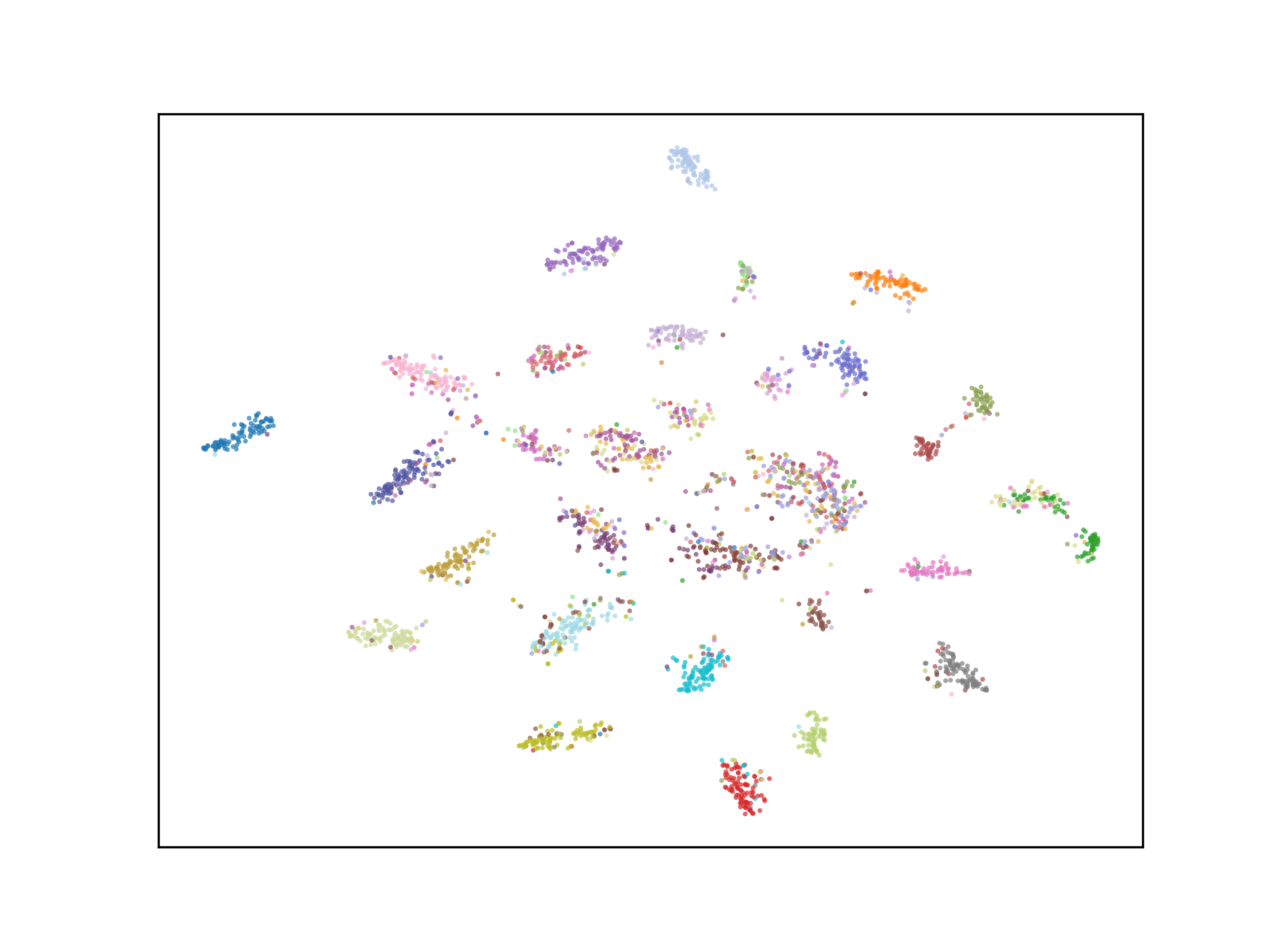}}\\
	\subfloat{\includegraphics[width=0.5\columnwidth]{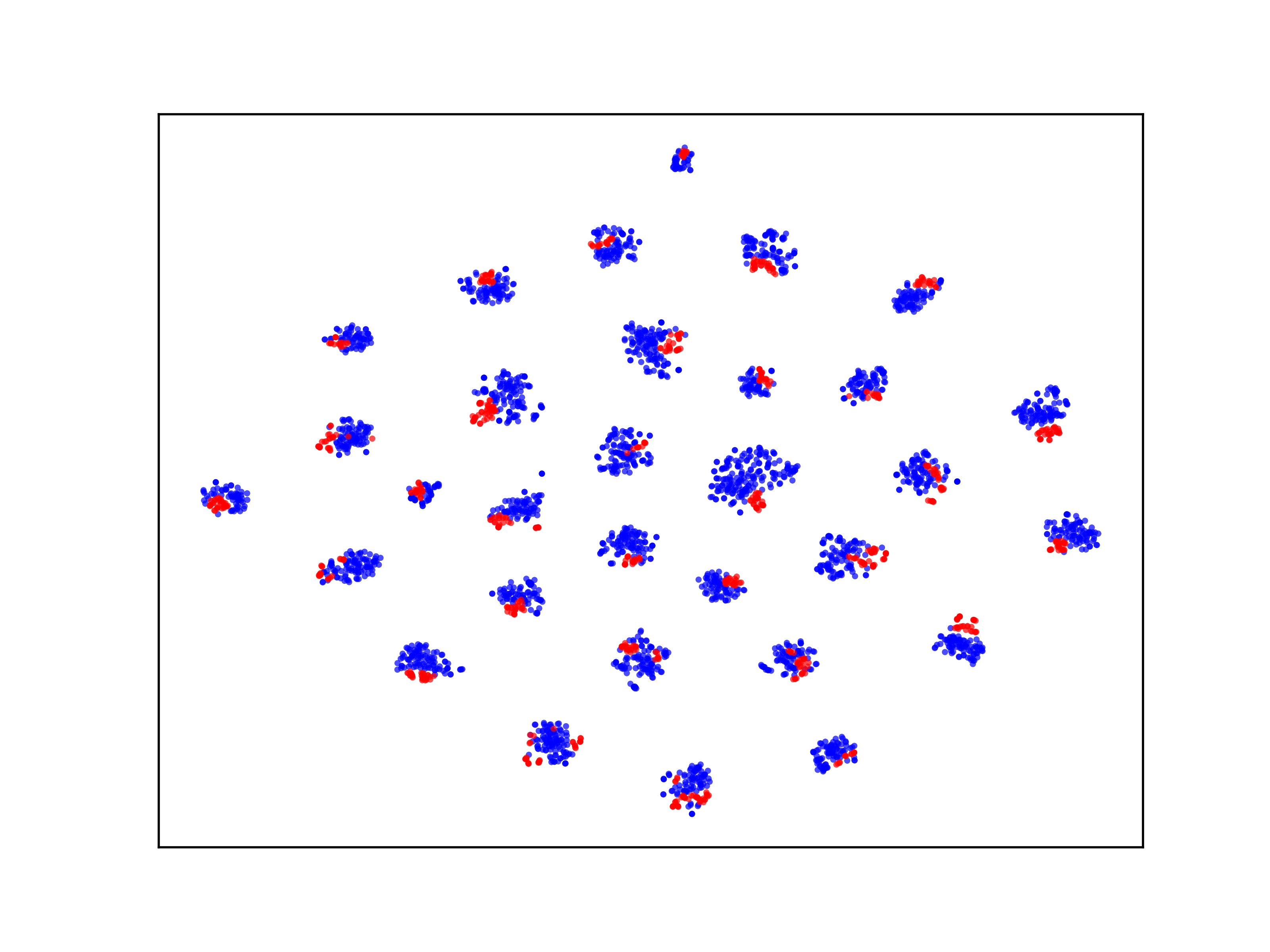}}
	\subfloat{\includegraphics[width=0.5\columnwidth]{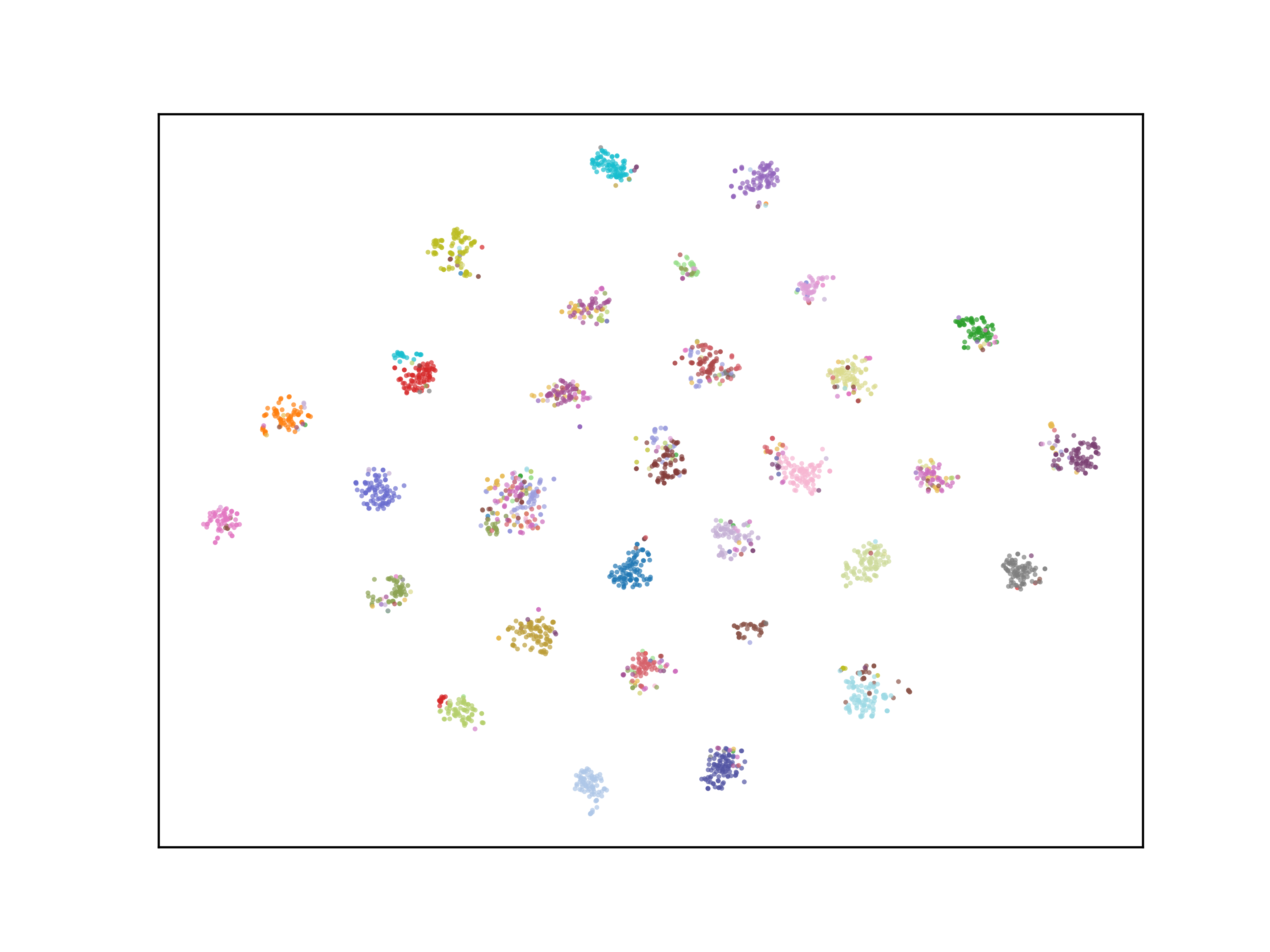}}\\
	\caption{The result of task W$\rightarrow$A is similar to task D$\rightarrow$A.}
	\label{W-A}
\end{figure}

Our visualization experiment is carried out on $D \rightarrow A$ and $W \rightarrow A$ tasks in the data set Office31, which are the two most difficult tasks in Office31. The baselines we chose were ResNet-50 pre-trained on ImageNet and CDAN\cite{long2018conditional}. We chose CDAN because it is a typical conditional domain alignment method. Pre-trained Resnet-50 is fine-tuned on the source domain and then tested on the target domain. Results of CDAN are obtained by running the official code. We train all the models until convergence, then encode the data of source domain and target domain with the model, and take representation before the final linear classification layer as feature vectors. We use t-SNE to visualize the features, using the t-SNE function of scikit-learn with default parameters. The results are in the link.

Figure \ref{D-A} shows the results of task $D \rightarrow A$. From top to bottom are the feature visualizations on task $D \rightarrow A$ of ResNet-50, CDAN, and SIDA, respectively. The left column is the feature comparison of the source and target domains. Red represents the source domain, and blue represents the target domain. The results show that SIDA emphasises discriminability of features. The right column shows the feature of different classes on the target domain. SIDA makes target features better distinguishable.

Figure \ref{W-A} shows the results of task $W \rightarrow A$. The results on task $W \rightarrow A$ are similar to task $D \rightarrow A$.

The visualization results show that SIDA can make the features of different categories more distinguishable, a natural consequence of maximizing MI among features from the same category. Thus features can be easier for classification, as the visualization shows.

\clearpage

\bibliographystyle{named}
\bibliography{iclr2022_conference}

\end{document}